\title[Online No-regret Model-Based Meta RL for Personalized Navigation]{Online No-regret Model-Based Meta RL for Personalized Navigation}
\author{%
 \Name{Yuda Song} \textsuperscript{\rm 1} \Email{yudas@andrew.cmu.edu}\\
 \Name{Ye Yuan}  \textsuperscript{\rm 1}  \Email{yyuan2@cs.cmu.edu}\\
 \Name{Wen Sun}  \textsuperscript{\rm 2}  \Email{ws455@cornell.edu}\\
 \Name{Kris Kitani} \textsuperscript{\rm 1} \Email{kkitani@cs.cmu.edu}\\
 \addr \textsuperscript{\rm 1} Carnegie Mellon University, \textsuperscript{\rm 2} Cornell University
}
\begin{document}

\maketitle

\begin{abstract}%
The interaction between a vehicle navigation system and the driver of the vehicle can be formulated as a model-based reinforcement learning problem, where the navigation systems (agent) must quickly adapt to the characteristics of the driver (environmental dynamics) to provide the best sequence of turn-by-turn driving instructions. Most modern day navigation systems (e.g, Google maps, Waze, Garmin) are not designed to personalize their low-level interactions for individual users across a wide range of driving styles (e.g., vehicle type, reaction time, level of expertise). Towards the development of personalized navigation systems that adapt to a variety of driving styles, we propose an online no-regret model-based RL method that quickly conforms to the dynamics of the current user. As the user interacts with it, the navigation system quickly builds a user-specific model, from which navigation commands are optimized using model predictive control. By personalizing the policy in this way, our method is able to give well-timed driving instructions that match the user's dynamics. Our theoretical analysis shows that our method is a no-regret algorithm and we provide the convergence rate in the agnostic setting. Our empirical analysis with 60+ hours of real-world user data using a driving simulator shows that our method can reduce the number of collisions by more than 60\%.\end{abstract}

\vspace{0.2cm}
\begin{keywords}%
  Model-based RL, Online Learning, System Identification, Personalization
\end{keywords}


\section{Introduction}
\label{sec:intro}

Reinforcement Learning (RL) and Markov Decision Processes (MDP) \cite{sutton2018reinforcement} provide a natural paradigm for training an agent to build a navigation system. That is, by treating the navigation system that delivers instructions as the \textit{agent} and combining the vehicle and the user as the \textit{environment}, we can design RL algorithms to train a navigation agent that provides proper instructions by maximizing a reward of interest. For example, we could reward the agent when it delivers accurate and timely instructions. There has been a rich line of research on applying RL in various navigation tasks, including robot navigation \cite{surmann2020deep, liu2020robot}, indoor navigation \cite{chen2020soundspaces,chen2021semantic}, navigation for blind people~\cite{ohnbar2018personalized}, etc. In this work, we put our focus on vehicle navigation, a task also explored by the previous literature \cite{ kiran2021deep, stafylopatis1998autonomous, deshpande2019deep}. 

\begin{figure}
    \centering
    \begin{minipage}{0.5\textwidth}
        \centering
    \includegraphics[width=1\linewidth]{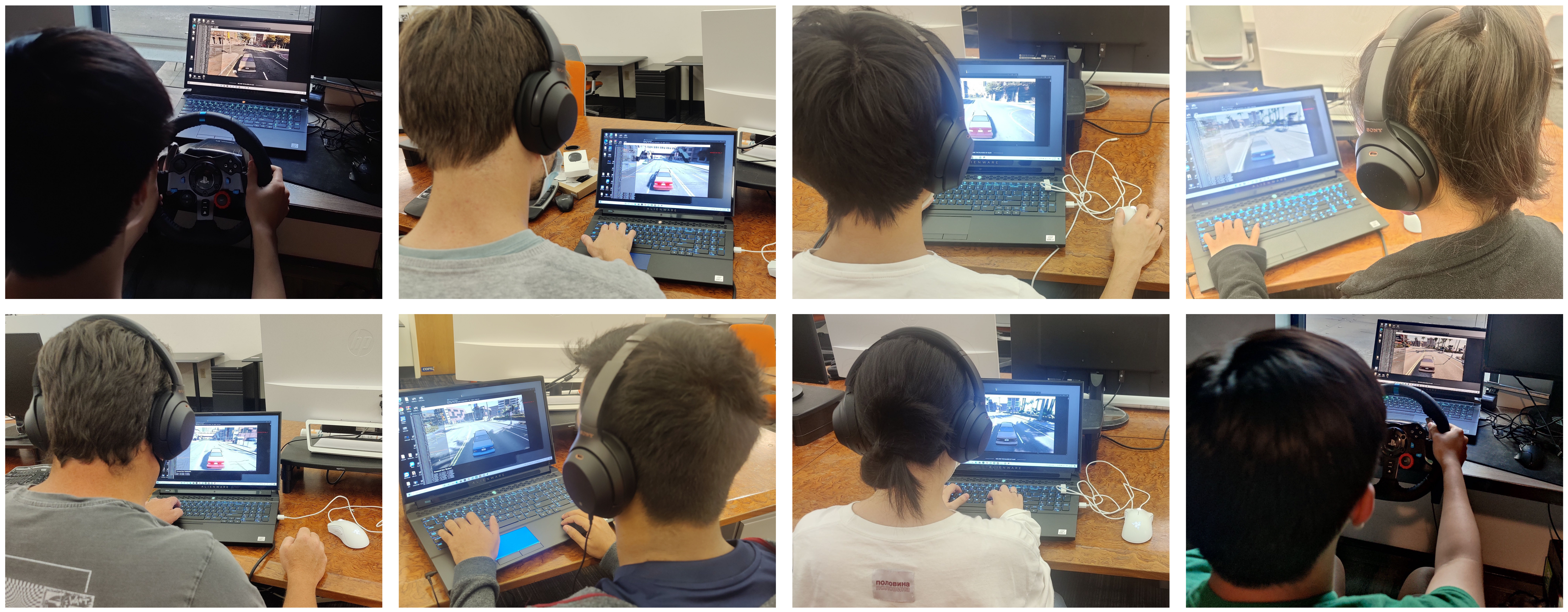}
    \caption{Different participants using different driving kits in our navigation system built on Carla. Each participant at least interacts with the system for three hours.}
    \label{fig:people}
    \end{minipage}\hfill
    \begin{minipage}{0.45\textwidth}
    \centering
    \includegraphics[width=0.48\textwidth]{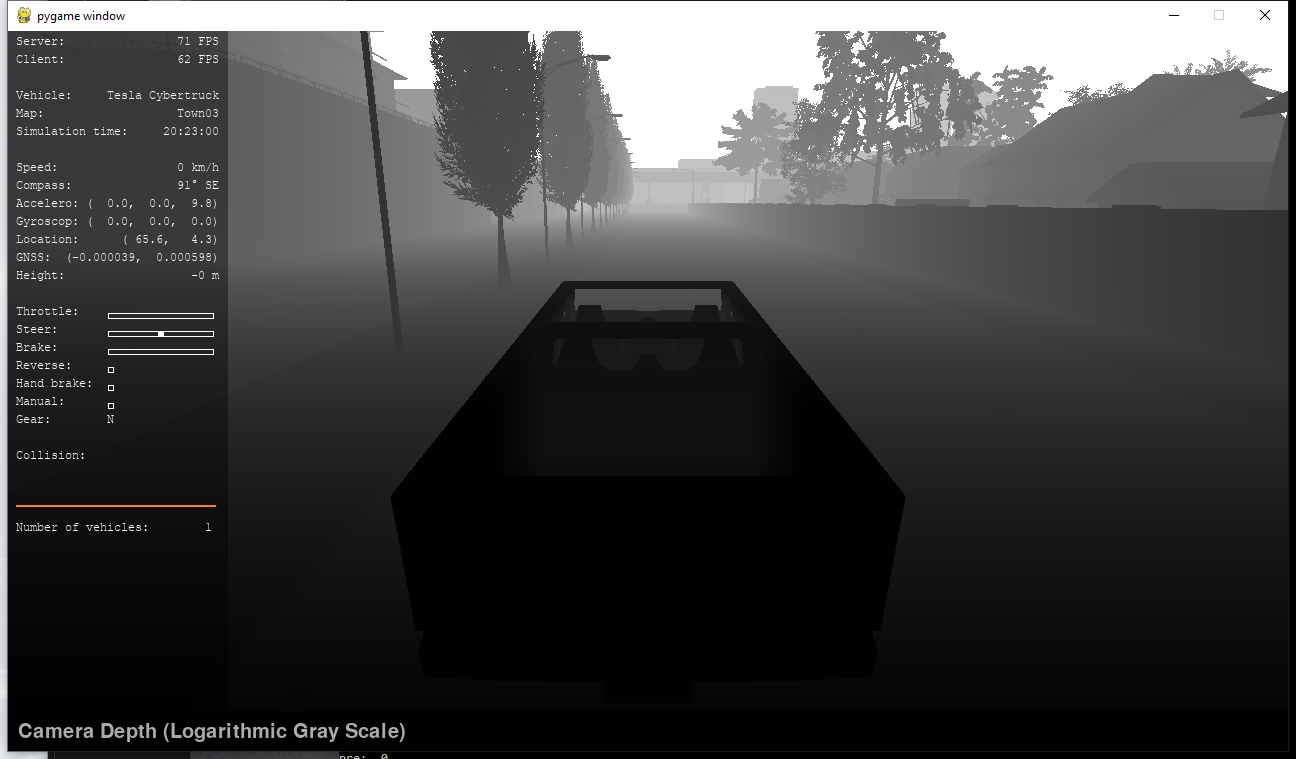}
    \includegraphics[width=0.48\textwidth]{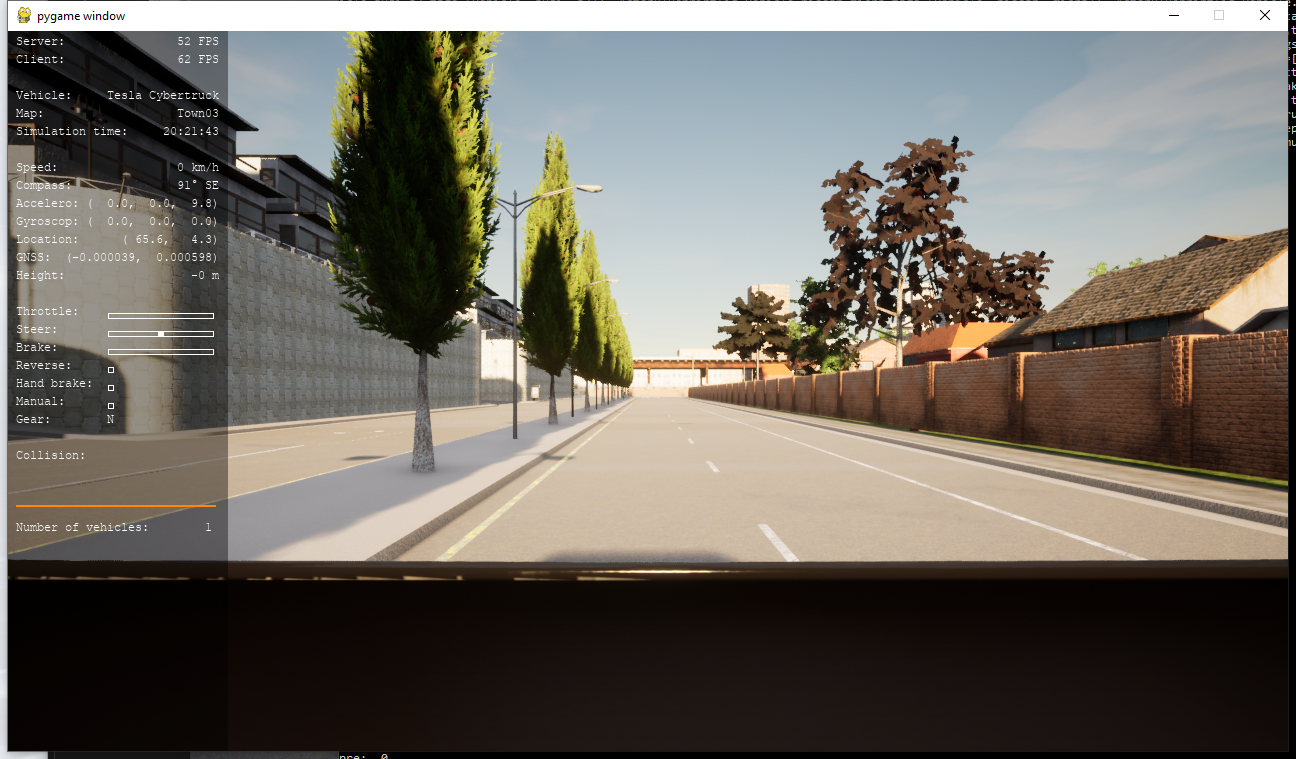}
    \caption{Examples of dynamics changes: left: using different vehicles (cybertruck) and different visual situations (mimicking foggy situations). Right: using a first-person view instead of a third-person view, while the latter is easier to control.}
    \label{fig:dynamics_changes}
    \end{minipage}
\end{figure}

However, most of the previous works only consider a static scenario, where the user (robot, vehicle or human) that receives the instructions does not change its dynamics during the deployment phases. This is not practical in the real world - in practice, the navigation system is faced with different users with changing or unseen dynamics during deployment (in our case, we could face different drivers, vehicles and weather conditions everyday), which requires fast personalization of the navigation system. However, there is little hope learning a monotonic policy that fits all (and unseen) users. In fact, even well-established navigation systems such as Google Maps are only designed for the average user. Thus, personalization needs to be achieved through adaptation, and we need a policy that can quickly adapt to the incoming user. Recently, meta learning \cite{finn2017model,duan2016rl,finn2017one} presents a way to train policies that adapt to new tasks by few-shot adaptation. Similarly, in our online setting, we need to focus on sample efficiency in order to have performance guarantee, i.e., we want to adapt with as few samples as possible - otherwise, a policy is sub-optimal as long as the adaptation is unfinished. Towards this direction, model-based RL methods have shown promising sample complexity/efficiency results (compared to model-free algorithms) in both theoretical analysis \cite{sun2019model,tu2019gap} and empirical applications \cite{chua2018deep,DBLP:journals/corr/abs-1907-02057}. 

 Drawing the above intuitions from meta RL and MBRL, in this work we propose an online model-based meta RL algorithm for building a personalized vehicle navigation system. Our algorithm trains a meta dynamics model that can quickly adapt to the ground truth dynamics of each incoming user, and a policy can easily be induced by planning with the adapted model. Theoretically, we show that our algorithm is no-regret. Empirically, we develop a navigation system inside the Carla simulator \cite{dosovitskiy2017carla} to evaluate our algorithm. The state space of the navigation system is the specification of the vehicle (e.g., location, velocity) and the action space is a set of audio instructions. A policy will be trained and tested inside the system with a variety of scenarios (driver, vehicle type, visual condition, level of skills). Ideally, a personalized policy should deliver timely audio instructions to each different user. We conduct extensive experiments (60+ hours) on real human participants with diverse driving experiences as shown in Fig.~\ref{fig:people}, which is significantly more challenging than most previous works that only perform experiments with simulated agents instead of real humans \cite{koh2020real, nagabandi2018deep}. The user study shows that our algorithm outperforms the baselines in following optimal routes and it is better at enforcing safety where the collision rate is significantly reduced by more than 60\%. We further highlight that the proposed algorithm is able to generate new actions to facilitate the navigation process. 
 
Our contributions are as follows: 1) we propose an algorithm that guarantees an adaptive policy that achieves near-optimal performance in a stream of changing dynamics and environments. 2) Theoretically, we prove our algorithm is no-regret. 3) We build a vehicle navigation system inside the state-of-the-art driving simulator that could also facilitate future research in this direction. 4) We perform extensive real-human experiments which demonstrate our method's strong ability to adapt to user dynamics and improve safety.
\section{Related Works}
\subsection{RL for Navigation System}
The reinforcement learning paradigm fits naturally into the navigation tasks and has encouraged a wide range of practical studies. In this work we focus on vehicle navigation \cite{koh2020real, kiran2021deep, stafylopatis1998autonomous, deshpande2019deep}. Previous work can be separated into two major categories: navigation with human interaction \cite{thomaz2006reinforcement, wang2003user, hemminahaus2017towards} and navigation without human interaction~\cite{kahn2018self, ross2008bayesian}. The first category includes tasks such as indoor navigation and blind person navigation. Such tasks require considering personalization issues when facing different users, who potentially bring drastically different dynamics to the system. Previously, vehicle navigation is categorized into the second field since there is no human interaction. However, in our work, we provide a novel perspective and introduce new challenges by incorporating human drivers into the system. Thus in order for an algorithm to solve this new task, it should consider personalization, accuracy and safety at the same time. In this work, we theoretically and empirically show that our proposed solution could address the three key components simultaneously.

\subsection{Model-based RL and Meta RL}
Model-based RL and meta RL are designed to achieve efficient adaptation. MBRL aims to model the dynamics of the environments and plan in the learned dynamics models. Prior works \cite{sun2019model,tu2019gap,chua2018deep,song2021pc} have shown superior sample efficiency of model-based approaches than their model-free counterparts, both in theory and in practice. For navigation with human interaction, prior works have shown promising results by modeling human dynamics \cite{torrey2013robot, iqbal2016tempo, daniele2017navigational}. However, these works do not provide any personalized navigation.

Towards the adaptability side, meta RL approaches~\cite{finn2017model,finn2017one,duan2016rl} have demonstrated the ability to adapt to a new task during test time by few-shot adaptation. Previously, there are also works that propose meta-learning algorithms for model-based RL \cite{nagabandi2018deep, nagabandi2018learning, lin2020model, clavera2018model, belkhale2021model, saemundsson2018meta}. However, these methods do not have any theoretical guarantee. Additionally, it is unclear if these works can adapt in real-world scenarios since their evaluation is based only on simulation or non-human experiments. In contrast, the proposed method is evaluated with extensive user study (60+ hours) with real-human participants which demonstrates the method's ability to deliver personalized driving instructions that quickly adapts to the ever-changing dynamics of the user.

\section{Preliminaries}
\label{sec:pre}

A finite horizon (undiscounted) Markov Decision Processes (MDP) is defined by $\mathcal{M} = \{\Scal, \Acal, P, H, \\ C, \mu\}$, where $\Scal$ is the state space, $\Acal$ is the action space, $P: \Scal \times \Acal \to \Delta(\Scal)$ is the transitional dynamics kernel, $C: \Scal \times \Acal \to \mathbb{R}$ is the cost function, $H$ is the horizon and $\mu \in  \Delta(\Scal)$ is the initial state distribution. In this work, we consider an online learning setup where we encounter a stream of MDPs $\{\Mcal^{(t)}\}_{t=1}^T$. Each $\Mcal^{(t)}$ is defined by $\{\Scal, \Acal, P^{(t)}, H,C, \mu\}$, i.e., the transitional dynamics kernels are different across the MDPs. The goal is to learn a stochastic policy $\pi: \Scal \to \Delta(\Acal)$ that minimizes the total sum of costs. Denote $d_{h,\mu,P^{(t)}}^{\pi}(s,a) = \EE_{s \sim \mu}\left[\PP(s_h = s, a_h = a | s_0 = x, \pi, P^{(t)})\right]$ to be the probability of visiting $(s,a)$ under policy $\pi$, initial distribution $\mu$ and transition $P^{(t)}$.  Let $d_{\mu,P^{(t)}}^{\pi} = \frac{1}{H}\sum_{h=0}^{H-1} d_{\mu,h,P^{(t)}}^{\pi}$ be the average state-action distribution. Since in the online setting, if the algorithm incurs a sequence of (adapted) policies $\boldsymbol{\pi} = \{\pi^{(t)}\}_{t=1}^T$, the value at round $t$ is $ V^{(t)}(s) = \EE \left[\sum_{h=0}^{H-1} C(s_h,a_h | s_0 = s,\pi^{(t)},  P^{(t)})\right]$. Then $J(\boldsymbol{\pi}) = \frac{1}{T} \sum_{t=1}^T \EE_{s \sim \mu}\left[ V^{(t)}(s) \right]$ is the objective function.  We also consider the best policy from hindsight to be adaptive: let $\boldsymbol{\pi^\ast} = \{\pi^{\ast(t)}\}_{t=1}^T$ be a sequence of policies adapted from a meta policy (or model in MBRL setting) $\pi^\ast$ (which we will define rigorously in Sec.~\ref{sec:analysis}), and we mildly abuse notation here by letting $\Pi$ be the class of regular policies and meta policies, then we define the regret of an algorithm up to time $T$ as:
$R(T) = J(\boldsymbol{\pi}) - \min_{\pi^\ast \in \Pi}J(\boldsymbol{\pi^\ast})$,
and the goal is to devise an algorithm whose regret diminishes with respect to T.

Finally we introduce the coverage coefficient term for a policy $\pi$: $c_{\nu}^{\pi} = \sup_{s,a} \frac{d_{\mu}^{\pi}(s,a)}{\nu(s,a)},$ where $\nu$ is some state-action distribution. This is the maximum mismatch between an exploration distribution $\nu$ and the state-action distribution induced by a policy $\pi$.

\section{Online Meta Model-based RL}

In this section we present our algorithm: Online Meta Model-based Reinforcement Learning, a combination of online model-based system identification paradigm and meta model-based reinforce-
ment learning. We provide the pseudocode of our algorithm in Alg.~\ref{alg:alg_box}, Appendix Sec.~\ref{app:alg}. 

Our overall goal is to learn a meta dynamics model $\widehat{P}: \Scal \times \Acal \to \Delta(\Scal)$ in an online manner, such that if we have access to a few samples $D = \{s_d,a_d,s'_d\}_{d=1}^{|D|} \sim d_{P^{(t)}}^{\pi}$ induced by some policy $\pi$ in the environment of interest $\Mcal^{(t)}$, we can quickly adapt to the new environment by performing one-shot gradient descent on the model-learning loss function $\ell$ (such as MLE): $U(\widehat{P},D) := \widehat{P} - \alpha_{\text{adapt}} \frac{1}{|D|} \sum_{d=1}^{|D|} \nabla\ell(\widehat{P}(s_d,a_d),s'_d)$, which will be an accurate estimation on the ground truth dynamics $P^{(t)}$. Here the gradient is taken with respect to the parameters of $\widehat{P}$ and $\alpha_{\text{adapt}}$ is the adaptation learning rate. This remains for the rest of this section. With such models, we then can plan (running policy optimization methods or using optimal control methods) to obtain a policy for each incoming environment. 

Specifically, we assume that we begin with an exploration policy $\pi^e$ which could be a pre-defined non-learning-based policy. Such an exploration policy is important in two ways: first is to collect samples for the adaptation process and second is to cover the visiting distribution for a good policy for the theoretical analysis.  We also assume we have access to an offline data set $D_{\text{off}}$ which includes transition samples from a set of MDPs $\{\Mcal_j\}_{j=1}^M$, where $M$ should be much smaller than $T$. In practice, we can simply build the offline dataset by rolling out the explore policy in $\{\Mcal_j\}_{j=1}^M$.
Our algorithm starts with first training a warm-started model $\widehat{P}^{(1)}$ with the offline dataset:
\begin{align}
    \widehat{P}^{(1)} = \argmax_{P \in \Pcal} \sum_{s,a,s' \in D_{\text{off}}} \log P(s'|s,a),
\label{eq:offline}
\end{align}
where $\Pcal$ is the model class.

Next during the online phase when every iteration $t$ a new $\Mcal^{(t)}$ comes in, at the beginning of this iteration we first collect one trajectory $\tau_t$ with the exploration policy $\pi^e$, then we perform one shot adaptation on our latest meta model $\widehat{P}^{(t)}$ to obtain $U(\widehat{P}^{(t)}), \tau_t)$:
\begin{align}
    U(\widehat{P}^{(t)}, \tau_t) = \widehat{P}^{(t)} + \alpha_{\text{adapt}} \nabla \sum_{s,a,s' \in \tau_t} \log(\widehat{P}^{(t)}(s'|s,a)).
\label{eq:fewshot}
\end{align}

After we have the estimation of the current dynamics, we can construct our policy $\widehat{\pi}^{(t)}$, which is defined by running model predictive control methods such as CEM \cite{de2005tutorial}, MPPI \cite{williams2017information} or tree search \cite{schrittwieser2020mastering} on the current adapted dynamics model $U(\widehat{P}^{(t)}, \tau_t)$. Thus we denote the policy as 
$\widehat{\pi}^{(t)} = \text{MPC}(U(\widehat{P}^{(t)}, \tau_t)).$

With the constructed policy $\widehat{\pi}^{(t)}$, we then collect $K$ trajectories inside the current $\Mcal^{(t)}$: with probability $\frac{1}{2}$ we will follow the current policy $\widehat{\pi}^{(t)}$, and with probability $\frac{1}{2}$ we will follow the exploration policy $\pi^{e}$.\footnote{Note that the probability $\frac{1}{2}$ here is for the ease of theoretical analysis. In fact, one can change the probability of rolling out with $\pi^e$ to arbitrary non-zero probability and will only change a constant coefficient on the final result.} For simplicity, we denote the state-action distribution induced by the exploration policy as $\nu_t := d_{P^{(t)}}^{\pi^e}$, which will be used for analysis in the next section. Finally after we store all the trajectories we obtained in the current iteration into dataset $D_t$, along with samples from previous iterations, we can perform no-regret online meta learning algorithm (e.g., Follow The Meta Leader) to obtain the new meta model $\widehat{P}^{(t+1)}$:
\begin{align}
\widehat{P}^{(t+1)} = \argmax_{P \in \Pcal} \sum_{n=1}^t \sum_{s,a,s' \in D_n} \log U(P, \tau_n)(s'|s,a).
\label{eq:ftml}
\end{align}
Here we treat the model class and the meta model class as the same class since they are defined to have the same parametrization.

Thus the proposed method integrates online learning, meta learning and  MBRL and is better than the previous algorithms in the following sense: comparing with online SystemID methods, our method can return a set of personalized policies when incoming environments have different dynamics rather than policy for one fixed environment. Comparing with meta MBRL methods, our method can learn to adapt in an \textit{online} manner under extensive real-human user-study, where previous methods only work in simulation. Also meta RL needs to be trained on a set of tasks before testing so it is not applicable for our online setting. Another advantage of our method is that it has theoretical guarantee comparing with the meta RL methods, which we will show in the next section.


\section{Analysis}
\label{sec:analysis}

In this section we provide the regret analysis for Alg.~\ref{alg:alg_box}. At a high level, our regret analysis is based on the previous analysis of the online SystemID framework \cite{ross2012agnostic} that applies DAgger \cite{ross2011reduction} in the MBRL setting to train the sequence of models, where DAgger can be regarded as a no-regret online learning procedure such as Follow-the-Leader (FTL) algorithm \cite{shalev2011online}. In our setting, on the other hand, our model training framework could be regarded as Follow-the-Meta-Leader (FTML) \cite{finn2019online} that is also no-regret with our designed training objective. The overall intuition is that with the sequence of more reliable adapted models, the regret can be bounded and diminish over time. We first introduce our assumptions:
\begin{assum}[Optimal Control Oracle]
We assume that the model predictive control methods return us an $\epsilon-\text{optimal}$ policy. That is, for any transitional dynamics kernel $P$ and initial state distribution $\mu$, we have
\[\EE_{s \sim \mu} \left[ V^{P,\text{MPC}(P)}(s) \right] - \min_{\pi \in \Pi} \EE_{s \sim \mu} \left[ V^{P,\pi}(s) \right] \leq \epsilon_{\text{oc}}. \]
\end{assum}


\begin{assum}[Agnostic]
We assume agnostic setting. That is, our model class may not contain the ground truth meta model that can perfectly adapt to any dynamics. Formally, given arbitrary state-action distribution $d$, one trajectory $\tau$ induced by $P^{(t)}$, $\exists \widehat{P}^\ast \in \mathcal{P}$, such that $\forall t \in [T]$,
\[\EE_{s,a \sim d} \left[D_{KL}(U(\widehat{P}^\ast, \tau)(\cdot|s,a), P^{(t)}(\cdot|s,a)) \right] \leq \epsilon_{model}.\]
\label{assump:model}
\end{assum}
Note that $\epsilon_{\text{oc}}$ and $\epsilon_{\text{model}}$ are independent from our algorithm (i.e., we can not eliminate this term no matter how carefully we design our algorithm).


Our analysis begins with bounding the performance difference between the sequence of policies generated by our algorithm and any policy $\pi'$, which can be the policy induced by adapting from the best meta-model. As the first step, we can relate the performance difference to the model error with: 1. simulation lemma (Lemma~\ref{lem:simulation}) \cite{kearns2002near,sun2019model}. 2. We relate the trajectories induced by $\pi'$ and $\pi^e$ by the coefficient term introduced in the Sec.~\ref{sec:pre}. Note that since we should only care about comparing with ``good" $\pi'$, the coefficient is small under such circumstances.

The next step is thus to bound the model error over the online process. Let's first observe the following loss function:
$
    \ell^{(t)}(P) = \EE_{s,a \sim \rho_t} \left[D_{KL}\left( P(\cdot|s,a), P^{(t)}(\cdot|s,a) \right) \right], 
$
where $\rho_t$ defined as the state-action distribution of our data collection scheme. This correspond to the model error. Then if we use FTML to update the meta model:
$
    \widehat{P}^{(t+1)} = \argmin_{P \in \Pcal} \sum_{i=1}^t \ell^{(t)}(U(P, \tau_t)),
$
it is easy to see this update is equivalent to Eq.~\ref{eq:ftml} as in our algorithm. Then we can invoke the main result from \cite{finn2019online} (Lemma~\ref{lemma:ftml}), and we are ready to present our main result:
\begin{theorem}
Let $\{\widehat{P}^{(t)}\}_{t=1}^T$ be the learned meta models. Let $\{U(\widehat{P}^{(t)}, \tau_t)\}_{t=1}^T$ be the adapted model after the one-shot adaptations. Let $\boldsymbol{\widehat{\pi}} = \{\widehat{\pi}^{(t)}\}_{t=1}^T$, where $\widehat{\pi}^{(t)} := MPC(U(\widehat{P}^{(t)}, \tau_t))$. Let $\rho_t := \frac{1}{2}d_{P^{(t)}}^{\widehat{\pi}^{(t)}} + \frac{1}{2}\nu_t$ be the state-action distribution induced by our algorithm under $\Mcal^{(t)}$. Then for policy sequence $\boldsymbol{\pi'} = \{\pi'^{(t)}\}_{t=1}^T$, we have
 \footnote{Note that $\Tilde{O}$ hides the logarithmic terms.}
\begin{align*}
     J_{\mu}(\boldsymbol{\widehat{\pi}}) - J_{\mu}(\boldsymbol{\pi'})  \leq \epsilon_{oc} + \max_t(c_{\nu_t}^{\pi'^{(t)}})H^2 \sqrt{\epsilon_{\text{model}}} 
+\Tilde{O}\left( \frac{\max(c_{\nu_t}^{\pi'^{(t)}}) H^2}{\sqrt{T}} \right).
\end{align*}
\label{thm:main}
\end{theorem}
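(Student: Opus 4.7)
The plan is to decompose the regret round-by-round into an MPC-oracle term, a model-mismatch term for the learned policy, and a model-mismatch term for the comparator, then aggregate the two mismatch terms via the FTML no-regret bound together with the agnostic assumption.

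First, for each $t$ let $\widehat P_t := U(\widehat P^{(t)}, \tau_t)$ and add and subtract the value of each policy under $\widehat P_t$:
\[
V^{P^{(t)},\widehat\pi^{(t)}}_\mu - V^{P^{(t)},\pi'^{(t)}}_\mu = \bigl(V^{P^{(t)},\widehat\pi^{(t)}}_\mu - V^{\widehat P_t,\widehat\pi^{(t)}}_\mu\bigr) + \bigl(V^{\widehat P_t,\widehat\pi^{(t)}}_\mu - V^{\widehat P_t,\pi'^{(t)}}_\mu\bigr) + \bigl(V^{\widehat P_t,\pi'^{(t)}}_\mu - V^{P^{(t)},\pi'^{(t)}}_\mu\bigr).
\]
The middle term is bounded by $\epsilon_{\text{oc}}$ since $\widehat\pi^{(t)} = \text{MPC}(\widehat P_t)$ is near-optimal under $\widehat P_t$. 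The simulation lemma (Lemma~\ref{lem:simulation}) bounds each of the two outer differences by $O(H^2)$ times a total-variation gap between $\widehat P_t$ and $P^{(t)}$ averaged over $d^{\pi}_{P^{(t)}}$ for $\pi \in \{\widehat\pi^{(t)}, \pi'^{(t)}\}$; Pinsker then converts this to $O(H^2)\cdot\sqrt{\EE_{(s,a)\sim d^{\pi}_{P^{(t)}}} D_{KL}(\widehat P_t(\cdot|s,a), P^{(t)}(\cdot|s,a))}$ after a Jensen step.

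Second, I change measure to the data-collection distribution $\rho_t = \tfrac12 d^{\widehat\pi^{(t)}}_{P^{(t)}} + \tfrac12 \nu_t$. The $\widehat\pi^{(t)}$-distribution is pointwise at most $2\rho_t$, while by the definition of the coverage coefficient, $d^{\pi'^{(t)}}_{P^{(t)}} \leq c_{\nu_t}^{\pi'^{(t)}} \nu_t \leq 2\, c_{\nu_t}^{\pi'^{(t)}} \rho_t$. Introducing the per-round loss
\[
\ell^{(t)}(P) := \EE_{(s,a)\sim\rho_t}\bigl[D_{KL}\bigl(U(P,\tau_t)(\cdot|s,a),\, P^{(t)}(\cdot|s,a)\bigr)\bigr],
\]
averaging over $t$ and using Jensen once more to pull the square root outside the time average yields
\[
J_\mu(\boldsymbol{\widehat\pi}) - J_\mu(\boldsymbol{\pi'}) \leq \epsilon_{\text{oc}} + O(H^2)\cdot \max_t c_{\nu_t}^{\pi'^{(t)}} \cdot \sqrt{\tfrac1T \sum_{t=1}^T \ell^{(t)}(\widehat P^{(t)})}.
\]

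Third, I bound $\tfrac1T \sum_t \ell^{(t)}(\widehat P^{(t)})$ via the FTML regret guarantee: since Eq.~(\ref{eq:ftml}) is precisely the FTML update on $\{\ell^{(t)}\}$, Lemma~\ref{lemma:ftml} gives $\tfrac1T \sum_t \ell^{(t)}(\widehat P^{(t)}) \leq \tfrac1T \sum_t \ell^{(t)}(\widehat P^\ast) + \tilde O(1/T)$, and Assumption~\ref{assump:model} bounds each comparator term by $\epsilon_{\text{model}}$. Using subadditivity $\sqrt{a+b} \leq \sqrt{a}+\sqrt{b}$ then produces the claimed $\sqrt{\epsilon_{\text{model}}} + \tilde O(1/\sqrt{T})$ rate, multiplied by $O(H^2)\cdot \max_t c_{\nu_t}^{\pi'^{(t)}}$. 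The main obstacle will be this last step: matching the advertised $\tilde O(1/\sqrt{T})$ rate rather than the $\tilde O(T^{-1/4})$ one would get from combining a generic $O(\sqrt{T})$ FTL regret with Pinsker and Jensen. Recovering $\tilde O(1/\sqrt{T})$ requires Lemma~\ref{lemma:ftml} in its strongly convex form, giving $\tilde O(\log T)$ cumulative regret and hence $\tilde O(1/T)$ average, which imposes structural conditions on the composite map $P \mapsto \ell^{(t)}(U(P,\tau_t))$. A secondary technicality is handling the randomness of the single adaptation trajectory $\tau_t$ inside $\widehat P_t$, which I would absorb into the $\tilde O(\cdot)$ via a concentration argument or by working with a population version of the one-shot adaptation in the spirit of the MAML expected-loss formulation.
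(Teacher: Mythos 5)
Your proposal follows essentially the same route as the paper's proof: the same three-term value decomposition through the adapted model $U(\widehat P^{(t)},\tau_t)$ with the middle term absorbed into $\epsilon_{\text{oc}}$, the simulation lemma plus the coverage coefficient to change measure onto $\rho_t = \tfrac12 d^{\widehat\pi^{(t)}}_{P^{(t)}} + \tfrac12\nu_t$, and the strongly convex FTML $O(\log T)$ regret combined with Assumption~\ref{assump:model}, Pinsker, and Jensen to get $\sqrt{\epsilon_{\text{model}}} + \Tilde{O}(1/\sqrt{T})$. The only cosmetic difference is that you apply Pinsker per round before averaging while the paper keeps the $\ell_1$ model error until the final lemma, and you correctly flag the same subtleties (strong convexity of the composite loss, randomness of $\tau_t$) that the paper also leaves implicit.
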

We defer the full proofs in Appendix~\ref{app:proofs}. Thm.~\ref{thm:main} shows that given any $\boldsymbol{\pi'}$, which could be the policies induced by the best meta model in the model class, the performance gap incurred by our algorithm diminishes with rate $\Tilde{O}(\frac{1}{\sqrt{T}})$, along with the inevitable terms $\epsilon_{\text{oc}}$ and $\epsilon_{\text{model}}$, which our algorithm has no control over. Since our main result is based on any arbitrary policy sequence, we here first formally define the best adapted sequence of policies from hindsight, and we could easily conclude with the no-regret conclusion in the following corollary:
\begin{corollary}
Let $\boldsymbol{\widehat{\pi}} = \{\widehat{\pi}^{(t)}\}_{t=1}^T$ be the policies returned by our algorithm. Define $\pi^{\ast}_{P} = \argmin_{\pi \in \Pi} \\ \EE_{s \sim \mu}V^{\pi,P}(s)$. Let $\boldsymbol{\pi^{\ast}}_P = \{\pi^{\ast}_{U(P,\tau_t)}\}_{t=1}^T$. For simplicity let $\pi^{\ast(t)} = \pi^{\ast}_{U(P,\tau_t)}$. Let $R(T) = J(\boldsymbol{\widehat{\pi}}) - \min_{P \in \Pcal}J(\boldsymbol{\pi^{\ast}}_P)$ be the regret of our algorithm. We have
\begin{align*}
    \lim_{T \to \infty} R(T) = \epsilon_{oc} +\max_t(c_{\nu_t}^{\pi^{\ast(t)}})H^2 \sqrt{\epsilon_{\text{model}}}.
\end{align*}
\end{corollary}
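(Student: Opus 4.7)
The plan is to derive the corollary as an essentially immediate consequence of Theorem~\ref{thm:main} by specializing the comparator sequence $\boldsymbol{\pi'}$ to the best adapted sequence in hindsight, and then taking $T \to \infty$ to kill the vanishing term. Concretely, I would first let $P^{\dagger} \in \argmin_{P \in \Pcal} J(\boldsymbol{\pi^{\ast}}_P)$ be the meta-model whose induced sequence $\boldsymbol{\pi^{\ast}}_{P^{\dagger}} = \{\pi^{\ast}_{U(P^{\dagger},\tau_t)}\}_{t=1}^{T}$ realizes the minimum defining $R(T)$. Because each $\pi^{\ast(t)} = \pi^{\ast}_{U(P^{\dagger},\tau_t)}$ lies in the regular policy class $\Pi$, this sequence is a valid choice of $\boldsymbol{\pi'}$ in Theorem~\ref{thm:main}.

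Next I would instantiate Theorem~\ref{thm:main} with $\boldsymbol{\pi'} = \boldsymbol{\pi^{\ast}}_{P^{\dagger}}$. This directly yields
\begin{align*}
R(T) \;=\; J_{\mu}(\boldsymbol{\widehat{\pi}}) - J_{\mu}(\boldsymbol{\pi^{\ast}}_{P^{\dagger}})
\;\leq\; \epsilon_{\mathrm{oc}} + \max_{t}\bigl(c_{\nu_t}^{\pi^{\ast(t)}}\bigr) H^2 \sqrt{\epsilon_{\mathrm{model}}} + \widetilde{O}\!\left( \frac{\max_{t}\bigl(c_{\nu_t}^{\pi^{\ast(t)}}\bigr) H^2}{\sqrt{T}} \right).
\end{align*}
Here the coverage coefficients appearing on the right are evaluated at exactly the comparator $\pi^{\ast(t)}$ demanded by the corollary statement. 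I would note that the first two terms are independent of $T$, while the third is a $\widetilde{O}(T^{-1/2})$ term whose constants (the coverage coefficients, $H$, and the logarithmic factors hidden in $\widetilde{O}$) are all $T$-independent under the standing assumptions.

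Finally I would take $T \to \infty$, so that the $\widetilde{O}(1/\sqrt{T})$ contribution vanishes, leaving precisely $\limsup_{T\to\infty} R(T) \leq \epsilon_{\mathrm{oc}} + \max_{t}(c_{\nu_t}^{\pi^{\ast(t)}}) H^2 \sqrt{\epsilon_{\mathrm{model}}}$, which is the claimed asymptotic bound on the regret.

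I do not expect a real obstacle: the corollary is a direct specialization of Theorem~\ref{thm:main}. The only mild care point is bookkeeping around the comparator, namely verifying that $\boldsymbol{\pi^{\ast}}_{P^{\dagger}}$ is admissible as the $\boldsymbol{\pi'}$ in Theorem~\ref{thm:main} (each $\pi^{\ast(t)} \in \Pi$ by construction) and that the coverage coefficients $c_{\nu_t}^{\pi^{\ast(t)}}$ appearing in the bound are exactly the ones over which we take the maximum in the corollary statement. No additional concentration, martingale, or adaptation argument beyond what is already packaged inside Theorem~\ref{thm:main} is needed.
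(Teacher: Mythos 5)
Your proposal is correct and matches the paper's own proof essentially verbatim: both instantiate Theorem~\ref{thm:main} with the hindsight-optimal comparator sequence and let the $\widetilde{O}(1/\sqrt{T})$ term vanish as $T \to \infty$. If anything, your version is slightly more careful than the paper's, since you correctly state the conclusion as an upper bound ($\limsup_{T\to\infty} R(T) \leq \cdots$) rather than the equality written in the corollary, which does not actually follow from a one-sided bound.
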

\section{Experiments}
\subsection{System Setup}
\label{exp:set_up}

Our vehicle navigation system is developed inside the Carla simulator \cite{dosovitskiy2017carla}. Each navigation run follows the below procedure: a vehicle spawns at a specified location inside the simulator, and the navigation system provides one audio instruction to the user every one second (45 frames). The user will control the vehicle with either a keyboard or a driving kit, and react to the audio instructions, until the vehicle is near the specified destination point. The goal is to compare the reliability of different navigation algorithms based on optimal path tracking and safety. 

We develop the navigation system in 5 Carla native town maps, 
which include comprehensive traffic situations. 
We use Carla's built-in A* algorithm to search for the optimal path: we first manually specify a starting point and destination point, which define a route, and the A* planner returns a list of waypoints, defined by its x,y,z coordinates, that compose the optimal path.  

To introduce various and comprehensive dynamics, we invite human participants with different levels of skills and propose three ways to induce diverse dynamics: a) different built-in vehicles, b) different visual conditions, c) different cameras (points of view), as shown in Fig.~\ref{fig:dynamics_changes}. 

Finally, we introduce our design for the state and action spaces for our RL setting. We discretize the system by setting every 45 frames (around 1s) as one time step. We design each route to have a length of 2 minutes on average, thus each trajectory contains 120 state-action pairs on average. Each state is a 14-dimensional vector that contains low-level information about the vehicle, user inputs, and the immediate goal. For the action space, we adopt a discrete action space given the limited audio generation and modification functionality of Carla's interface. Nevertheless, we observe that our current design suffices to deliver timely instructions combined with our proposed algorithm. In fact, our algorithm could also expand the action space, which we will demonstrate in detail in Sec~\ref{exp:new_action}. Concretely, one action corresponds to one audio instruction, and the action space contains the 7 instructions in total, including the no instruction option.  In practice, we only play the corresponding audio when the current action is different from the previous action, or the current action has been repeated for 15 timesteps.

We defer a complete list of system details in Appendix~\ref{app:system}: the town maps can be found in Sec.~\ref{app:system:map}, state representation in Sec.~\ref{app:system:state}, actions in Sec.~\ref{app:system:action}, list of different vision conditions, vehicles, driving kits in Sec.~\ref{app:system:iter}.
\subsection{Practical Implementation}
In this section we introduce the practical implementation details. There are two major specifications for a successful transfer from theory to practice: the model parametrization and the MPC method.

For the model parametrization, we represent the (meta) model class $\Pcal$ as a class of deep feedforward neural networks. Each element in the class can be denoted as $P_{\theta}$, where here $\theta$ is the parameters of $P$. Here we adopt a deterministic model given there is no stochasticity in the underlying Carla system. Furthermore, since some instructions have lasting affect on system dynamics (e.g., prepare to turn), we feed a history of state-action pairs to the network for robust predictions. Concretely, the network input is a history of state-action pairs $h_t = \{s_{t-k}, a_{t-k}\}_{k=0}^K$ and the output of the network is the predicted next state $\widehat{s}_{t+1}$. In practice using $K=4$ suffices to include enough information for model prediction. In addition, we discover that predicting state differences in the egocentric vehicle coordinates boosts the accuracy of the model. 

Because of the deterministic system, maximizing log-likelihood is equivalent to minimizing the $l_2$ norms of the state differences. In this work, we follow the empirical success of L-step loss \cite{nagabandi2018deep, luo2018algorithmic}. Specifically, with a batch $\{h_t,s_{t+1}\}_{t=1}^B$, where $B$ is the batch size, for $P_{\theta}$, we update the model by

\begin{align}
\theta := \theta - \alpha_{\text{meta}} \sum_{i=1}^{B} \nabla_{\theta}  \left(\sum_{l=1}^L \left\|  (\hat{s}_{i+l} - \hat{s}_{i+l-1}) -( s_{i+l} - s_{i+l-1}) \right\|_2\right),
\label{eq:mini_batch}
\end{align}

\noindent where $\hat{h}_i = h_i$ and $\hat{s}_{i+l+1} = U(P_{\theta} ({\hat{h}_{i+l}, a_{i+l}}))$, and $\hat{h}_{i+l+1}$ is constructed by discarded the first state action pair in $\hat{h}_{i+l+1}$ and concatenate $\hat{s}_{i+l+1}$ and $a_{i+l+1}$ to its end. In our experiments, we use $L = 5$. Similarly, the parameter of $U(P_{\theta})$ is obtained by using Eq.~\ref{eq:mini_batch} with the adaptation sample (line 3 in Alg.~\ref{alg:alg_box}) where next states are predicted by $P_{\theta}$.

For MPC, since the action space is discrete, we adopt a tree search scheme, although the current design can be easily switched to MPPI or CEM. We define the cost of an action sequence $\{a_h\}_{h=1}^H$ under $P_{\theta}$ by:
$
    C(\{a_h\}_{h=1}^H, P_{\theta}) = \sum_{h=1}^H \|\hat{s}_{h+1} - g_{h+1}\|_2^2,
$
where $\{\hat{s}_{h+1}\}_{h=1}^H$ are obtained by autoregressively rolling out $P_{\theta}$ with $\{a_h\}_{h=1}^H$ and $\{g_{h+1}\}_{h=1}^H$ are the coordinates of goal waypoints. Note on the right hand side we abuse notation by letting $\hat{s}$ also be coordinates (but not the full states). Intuitively, we want the model-induced policy to be a residual policy of the explore policy, since the explore policy is accurate (but not personalized enough) for most situations. Thus we reward an action sequence that is the same as the explore policy action sequence (but at potentially shifted time intervals). We also add penalties for turning actions where there are no junctions. 

\subsection{Experimental Results}
\label{exp:results}
We evaluate our practical algorithm inside the system described in Sec.~\ref{exp:set_up}. For each of the 5 towns, we design 10 routes that contain all traffic conditions. We define using one navigation algorithm to run the 10 routes in one town as one iteration. For each of the towns, we first use the explore policy to collect 3 iterations of samples as the offline dataset. For the online process, the first 7 iterations involve different vehicles and visual conditions. In 3 out of 5 towns, we invite additional 6 participants with diverse driving skills to complete the next 6 iterations. Then for all the maps, we include 2 additional iterations that are controlled by different control equipment. Each human participant contributes at least 3-hour driving inside the system, and in total more than 60-hour driving data is used for our final evaluation.

\begin{figure*}[t]
    \centering
    \includegraphics[width=0.312\textwidth]{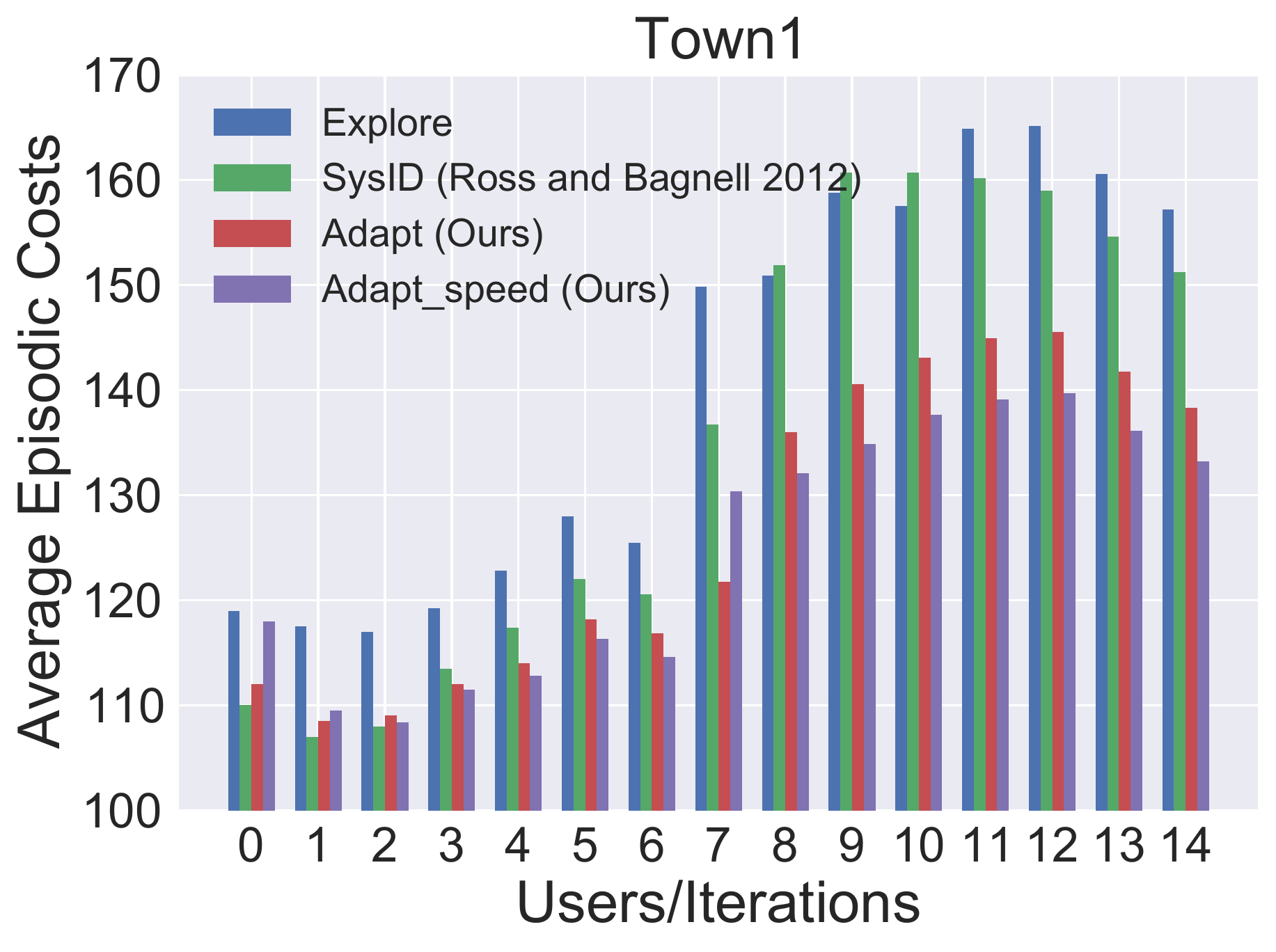}
    \includegraphics[width=0.312\textwidth]{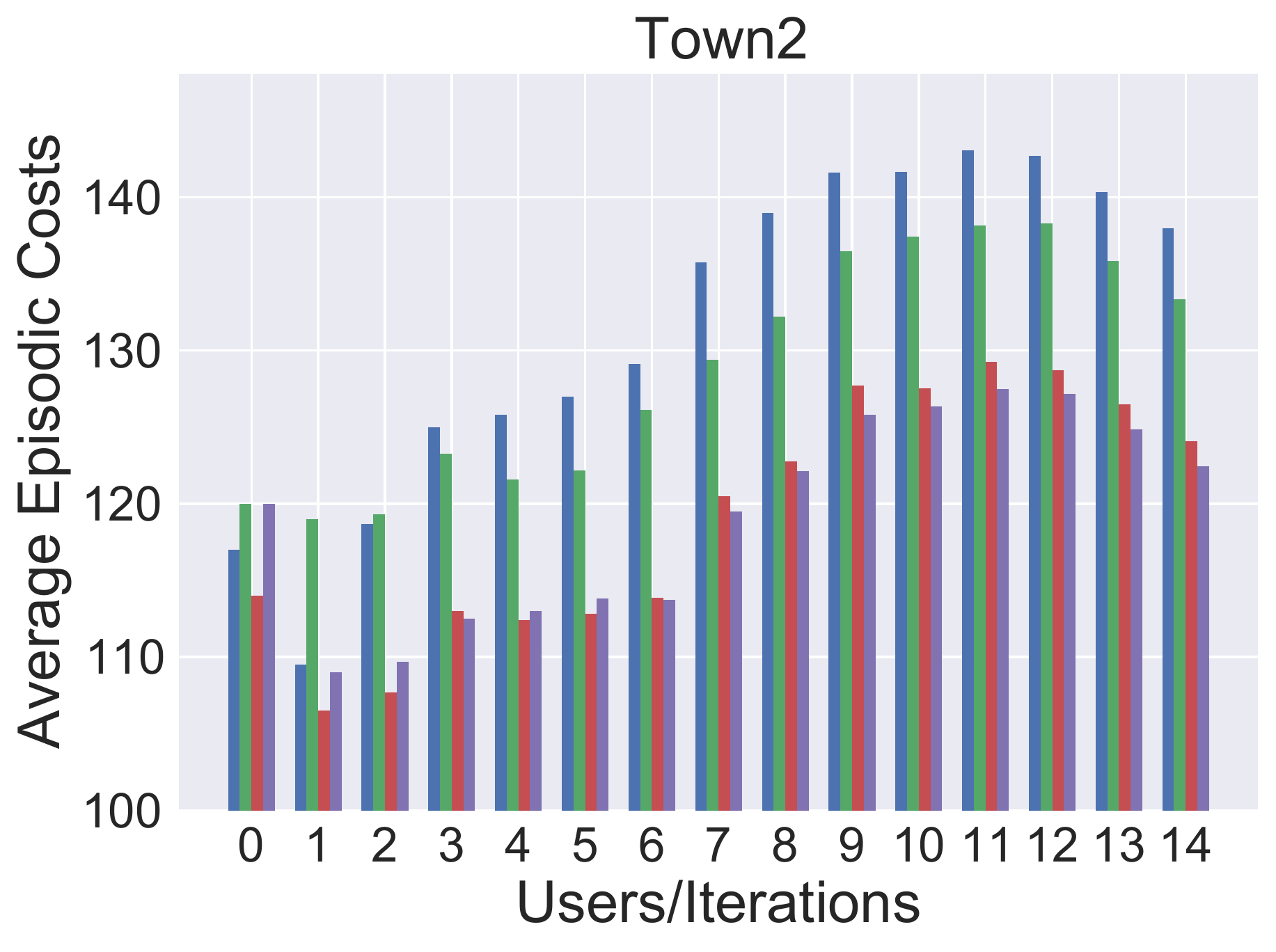}
    \includegraphics[width=0.312\textwidth]{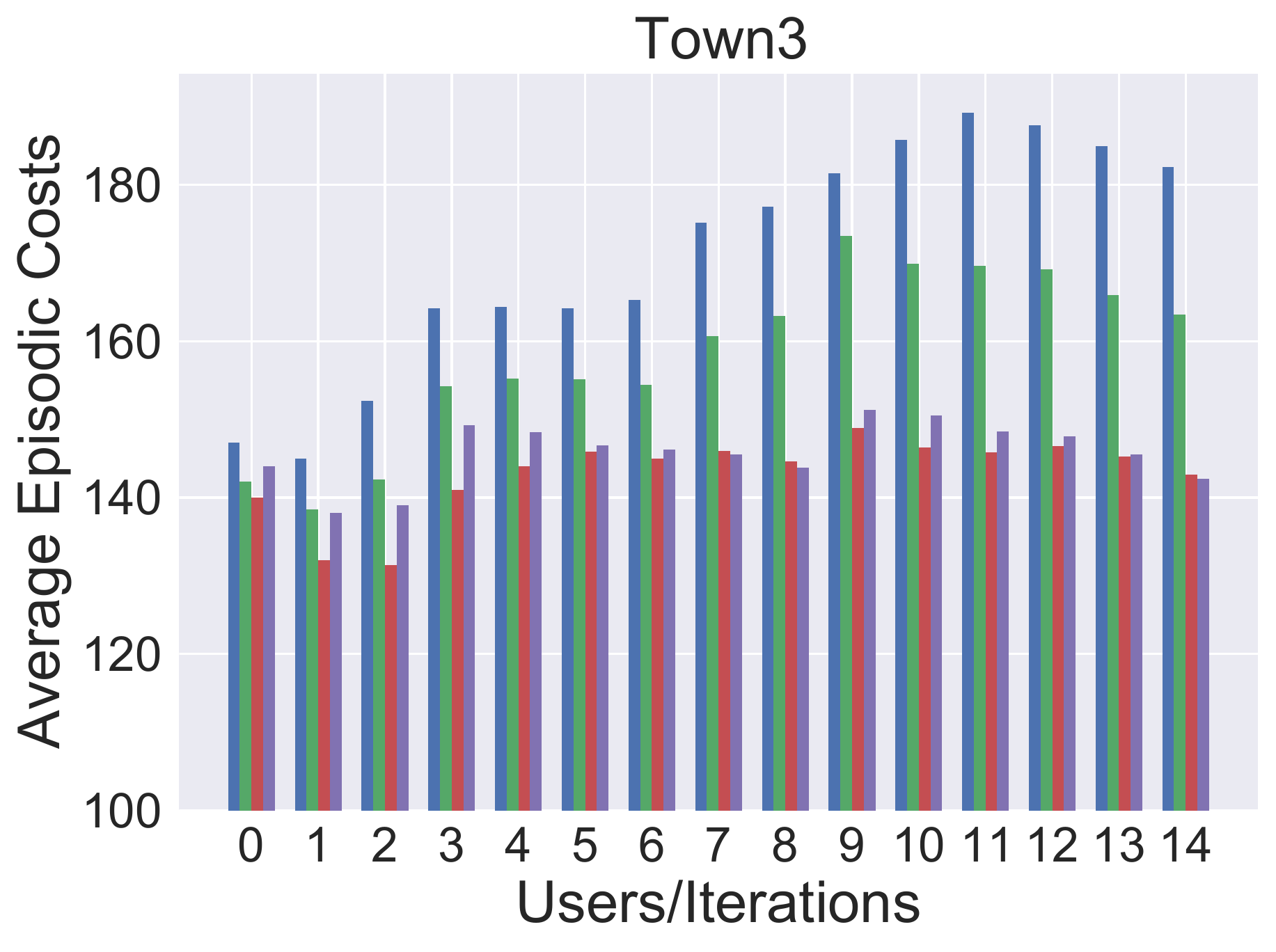}
    \caption{Performance bar plots of the four baselines in three town maps. The metric we use is the average episodic accumulative costs upon the current iteration. Here the episodic accumulative costs are calculated by averaging the point-wise tracking error over the ten routes that we defined in Sec.~\ref{exp:set_up}}
    \label{fig:results}
\end{figure*}
We include three baselines in this study: the first is the \textit{explore policy}\footnote{The name ``explore policy'' inherits from \cite{ross2012agnostic}, but here the explore policy is actually a very strong baseline, as described in the main text. 
}, a non-learning-based policy based on A* path planning algorithm. The underlying backbone of the explore policy is the Carla's autopilot agent, which provides expert demonstrations for the state-of-the-art autonomous driving algorithms \cite{chen2020learning, chen2021learning, prakash2021multi}. To increase the accuracy and robustness of the policy, we further include hard-coded heuristics for the corner cases. The second baseline is a policy induced by the non-adaptive model trained online \cite{ross2012agnostic}. We refer to this policy as \textit{SysID} in the rest of the sections. The model training of SysID follows Eq.~\ref{eq:mini_batch} with only one exception that the next states are predicted directly by the models instead of the adapted models (i.e., no meta adaptation). The third baseline is our algorithm and we include a variation of our algorithm as well, which will be introduced in detail in Sec.~\ref{exp:new_action}.

The metric we use in this section is the point-wise tracking error. Concretely, for a state $s_t$, the cost we are evaluating is $c(s_t) = \|x_{s_t} - x_{g_{t-1}}, y_{s_t} - y_{g_{t-1}} \|_2^2$,  where $x,y$ denotes the coordinates elements inside a state and $g_{t-1}$ denotes the goal selected from the waypoint list from the last timestep. 
We follow Sec.~\ref{sec:pre} and plot the empirical performance $\hat{J}(\boldsymbol{\pi}) = \frac{1}{T}\sum_{t=1}^T H_t^{\pi^{(t)}}$, where $H_t$ is the sum of costs in one episode. We show the performance of the 4 baselines in 3 town maps that involve human participants with different levels of driving skills in Fig.~\ref{fig:results} (we defer the results of the remaining two maps in Appendix~\ref{app:exp}). We observe that except for the first online iteration, our method (denoted as adapt) beats the other two baselines (explore and SysID) consistently in tracking the optimal routes. There are two possible reasons that the navigation system could accumulate huge costs: an ill-timed instruction could cause the drive to miss a turn at a junction thus follow a suboptimal route. On the other hand, if the navigation system is not adaptive to the driver, mildly ill-timed instructions may still keep the driver on the optimal route, but deviate from the waypoints (e.g., driving on the sidewalk). This will also raise safety issues as we will further evaluate in Sec.~\ref{exp:safety}. Note that our results indicate the significance of personalization: non-adaptive methods, even with online updates (SysID), behave suboptimally and we observe the performance gaps between non-adaptive methods and our method increase as the more users are involved in the system. We also observe that dynamics change induced by different users is much more challenging than simulation-based dynamics change as real human participants introduce significantly greater dynamics shifts than simulation-based changes, as indicated in Fig.\ref{fig:results} starting from user/iteration 7.

\subsection{Generating New Actions}
\label{exp:new_action}
In this section we analyze a new variant of our adaptive policy: adaptive policy with speed change. The new policy is designed to generate new actions by changing the playback speed of the original instructions: slower, normal and faster. Thus the new policy increases the action space dimensions by three times. We include the performance of this new variant along with other baselines in Fig.~\ref{fig:results}, denoted as ``Adaptive\_speed". We see that this new policy design performs similarly to the original adaptive policy at the first few iterations due to the lack of samples with speed change, but it gradually outperforms as the online procedure goes on.

\subsection{Safety}
\label{exp:safety}

In addition to better tracking, our empirical evaluation also shows that our method has better safety guarantees. In this section, we present the number of collisions for each method in Table~\ref{table:safety}. In each cell we show the total number of collisions averaged over the 10 routes in the corresponding town maps. We then average over the 15 online iterations and present the mean and standard deviation. We observe that comparing with the explore policy baseline, our methods (adaptive and adaptive speed) achieve over 50\% and 60\% decrease in terms of the number of collisions, respectively. The results indicate that by better tracking the optimal path and delivering more timely instructions with personalized policies, the adaptive policies indeed achieve a stronger safety guarantee. 

\subsection{Empirical Regret}
\label{exp:emp_regret}

In this section, we analyze the empirical regret of the adaptive policy, i.e., how our adaptive policy compares with the policy induced by the best empirical meta model from hindsight. We train the hindsight policy using the data collected from all 15 iterations and train with Eq.~\ref{eq:ftml}. We compare the two policies in Fig.~\ref{fig:empirical_regret}, following the same metric as in Fig.~\ref{fig:results}. The results show that the performance gap exists at the beginning, but it closes up as the online process proceeds. Here we want to emphasize that in practice, this empirical regret (performance gap) may not diminish strictly at a rate of $T^{-1/2}$ since we could only expect such rate in expectation, we do observe that such gap decreases over time and in some iteration, the performance of the adaptive policy is actually the same as the best policy from hindsight, indicating the strong adaptability of our proposed method.\\

\noindent
\begin{minipage}[t]{.43\textwidth}
\raggedright
    \centering
    \includegraphics[width=\textwidth]{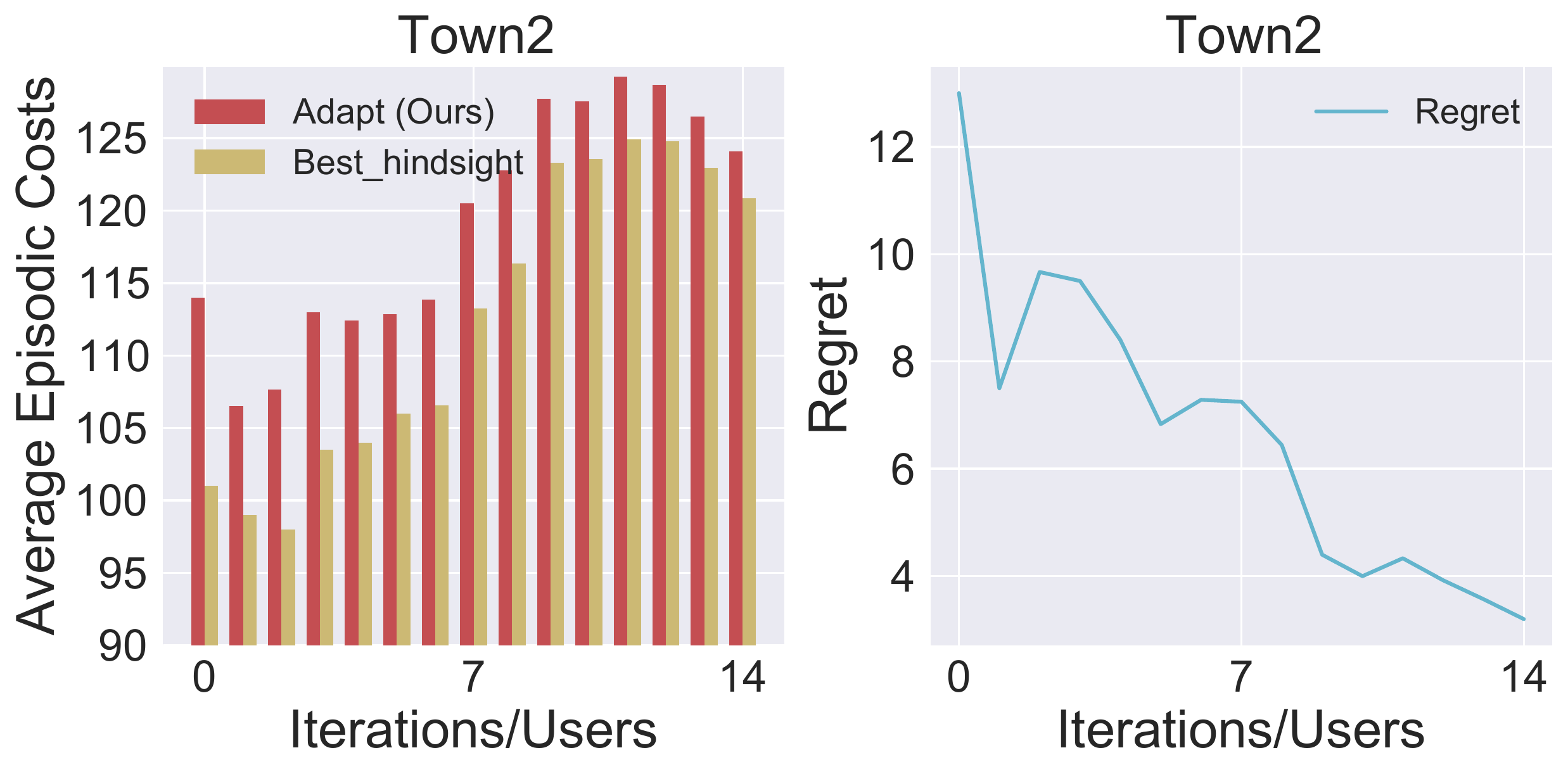}
    \captionof{figure}{Evaluation of empirical regret/performance gap of the adaptive policy in town 2. The evaluation metric and presentation follow Fig.~\ref{fig:results}.}
    \label{fig:empirical_regret}
\end{minipage}%
\hfill
\begin{minipage}[t]{.53\textwidth}
\raggedleft
   \vspace{-3.05cm}
\renewcommand{\arraystretch}{1.5}
 \centering  \resizebox{\columnwidth}{!}{
    \begin{tabular}{|l|c|c|c|} 
        \hline
        &  Town1  & Town2 & Town3   \\
        \hline
        Explore policy      &  3.08 (1.73) &  1.00 (1.18) &  1.38 (1.21)  \\
        \hline
        SysID            &  2.23 (1.19) &  0.69 (1.14) &  1.23 (1.19)  \\
        \hline
        Adapt   (Ours)         &  1.85 (1.29) &  0.31 (0.61) &  0.69 (0.72)  \\
        \hline
        Adapt Speed (Ours)    &  1.54 (1.45) &  0.23 (0.42) &  0.54 (0.50)\\
        \hline
        \end{tabular}}
      \captionof{table}{Average counts of total collisions of each baseline in the 3 town maps. The average is over 15 online iterations and the standard deviation is indicated in the parenthesis.}
      \label{table:safety}
\end{minipage}


\section{Summary}
In this work we propose an online meta MBRL algorithm that can adapt to different incoming dynamics in an online fashion. We proved that our algorithm is no-regret. We specifically study the application of the proposed algorithm in personalized voice navigation and develop a machine learning system, which could also be beneficial for future research. The extensive real-world use study shows that our algorithm could indeed efficiently adapt to unseen dynamics, and also improve the safety of the voice navigation. 



\clearpage
\bibliography{main}

\newpage
\onecolumn
\appendix


\section{Proofs}
\label{app:proofs}
In this section we prove Thm.~\ref{thm:main}. We first introduce the simulation lemma:
\begin{lemma}[Simulation Lemma] Consider two MDPs $\Mcal_1 = \{C, P\}$ and $\Mcal_2 = \{C', P'\}$ where $C$ and $P$ represent cost and transition. Let $J(\pi;C,P) = \EE_{s\sim \mu}\left[V^{\pi,P,c}(s)\right]$, where the superscript $C$ denotes that the value function $V$ is the expected sum of cost $C$, and for simplicity let $V^{\pi} = V^{\pi,P,C}$. Then for any policy $\pi:\Scal\times\Acal\mapsto \Delta(\Acal)$, we have:
\begin{align*}
J(\pi; C, P) - J(\pi; C', P') = \sum_{h=0}^{H-1} \mathbb{E}_{s,a\sim d_h^{\pi}} \left[ C(s,a) - C'(s,a) +  \mathbb{E}_{s'\sim P(\cdot | s,a)}V^{\pi}_h(s') - \EE_{s'\sim P'(\cdot|s,a)} V^{\pi}_h(s') \right].
\end{align*}\label{lem:simulation}
\end{lemma}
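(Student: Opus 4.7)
The plan is to prove this performance-difference identity via a per-step recursion on the value functions of the two MDPs, and then telescope across the horizon. I would start by defining the pointwise discrepancy $\delta_h(s) := V^{\pi,P,C}_h(s) - V^{\pi,P',C'}_h(s)$, so that $J(\pi;C,P) - J(\pi;C',P') = \mathbb{E}_{s_0\sim\mu}[\delta_0(s_0)]$ and the lemma reduces to proving a per-state recursive identity for $\delta_0$.

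Next I would expand $\delta_h(s)$ using the Bellman equations in each MDP and insert a bridge term. Writing $V^{\pi,P,C}_h(s) = \mathbb{E}_{a\sim\pi(\cdot|s)}[C(s,a) + \mathbb{E}_{s'\sim P(\cdot|s,a)}V^{\pi,P,C}_{h+1}(s')]$ and its primed analogue, then adding and subtracting $\mathbb{E}_{a\sim\pi,\,s'\sim P'}V^{\pi,P,C}_{h+1}(s')$, the difference cleanly splits into three contributions at step $h$: the cost mismatch $C(s,a) - C'(s,a)$, the transition mismatch applied to the MDP-1 value $\mathbb{E}_{s'\sim P}V^\pi_{h+1}(s') - \mathbb{E}_{s'\sim P'}V^\pi_{h+1}(s')$, and a recursive remainder $\mathbb{E}_{a\sim\pi,\,s'\sim P'}[\delta_{h+1}(s')]$. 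This bridge insertion is the only non-mechanical move; the specific choice is what aligns the integrand with the right-hand side of the lemma.

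Unrolling this recursion from $h=0$ down to $h=H-1$ with boundary condition $\delta_H\equiv 0$, the recursive remainder term propagates the state--action law forward one step at a time, and at level $h$ it produces exactly the $\pi$-induced $h$-step state--action occupancy $d_h^\pi$ (carried by the transitions through which the bridge telescopes). Integrating the outer $s_0\sim\mu$ and pushing it inside each step's expectation by linearity collapses everything into $\sum_{h=0}^{H-1}\mathbb{E}_{(s,a)\sim d_h^\pi}[\,\cdot\,]$ of the cost-plus-transition-mismatch integrand, which is precisely the claimed identity.

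The main bookkeeping obstacle is keeping track of which MDP's dynamics drive the occupancy $d_h^\pi$ appearing in the final form, since this is dictated entirely by the bridge choice: holding $V^\pi = V^{\pi,P,C}$ inside the expectation (as the statement does) forces the telescoping to advance via $P'$. Once that convention is pinned down, the rest is a short induction on $h$ together with linearity of expectation, and there are no probabilistic subtleties beyond that.
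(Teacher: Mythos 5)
Your proposal is correct and is exactly the standard telescoping argument that the paper itself omits (it defers to Lemma 10 of \cite{sun2019model}): define $\delta_h = V_h^{\pi,P,C} - V_h^{\pi,P',C'}$, insert the bridge term $\mathbb{E}_{s'\sim P'}V^{\pi,P,C}_{h+1}(s')$, and unroll with $\delta_H \equiv 0$. You also correctly flag the one point the lemma statement leaves implicit, namely that holding $V^\pi = V^{\pi,P,C}$ in the mismatch term forces the occupancy $d_h^\pi$ to be the one generated under $P'$ (and note that your recursion naturally yields $V^\pi_{h+1}(s')$ rather than the $V^\pi_h(s')$ written in the statement, which is an indexing convention, not an error in your argument).
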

We omit the proof since the simulation lemma is widely used in MBRL analysis. We refer the reader to lemma 10 of \cite{sun2019model} for an example.

Then let's start off with an easier problem, where we bound the performance difference between two single policies: 

\begin{lemma}
\label{lemma:perf_model}
Let $\widehat{\pi}, \pi' \in \Pi$, where $\widehat{P} \in \Pcal$ and $\widehat{\pi} = MPC(\widehat{P})$. Let $\nu$ be an exploration distribution, we have:
\[J_{\mu}(\widehat{\pi}) - J_{\mu}(\pi') \leq \epsilon_{oc} + H^2(c_{\nu}^{\widehat{\pi}} + c_{\nu}^{\pi'}) \mathbb{E}_{s,a\sim \nu} \left [\|\widehat{P}(\cdot | s,a) - P(\cdot | s,a)\|_1 \right] \]
\end{lemma}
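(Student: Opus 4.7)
The plan is to decompose the performance gap through the surrogate value under the learned model $\widehat{P}$, apply the simulation lemma to the two resulting model-mismatch terms, and then change measure from each policy's own state-action visitation to the exploration distribution $\nu$ via the coverage coefficient. Concretely, writing $J_{\mu}^{\widehat{P}}(\pi) := \EE_{s\sim\mu}[V^{\pi,\widehat{P}}(s)]$ for the value of $\pi$ under $\widehat{P}$ with the same cost $C$, I would start from the telescope
\begin{align*}
J_{\mu}(\widehat{\pi}) - J_{\mu}(\pi')
=\underbrace{\bigl[J_{\mu}(\widehat{\pi})-J_{\mu}^{\widehat{P}}(\widehat{\pi})\bigr]}_{(\mathrm{A})}
+\underbrace{\bigl[J_{\mu}^{\widehat{P}}(\widehat{\pi})-J_{\mu}^{\widehat{P}}(\pi')\bigr]}_{(\mathrm{B})}
+\underbrace{\bigl[J_{\mu}^{\widehat{P}}(\pi')-J_{\mu}(\pi')\bigr]}_{(\mathrm{C})}.
\end{align*}
Term $(\mathrm{B})$ is handled immediately by the Optimal Control Oracle assumption: since $\widehat{\pi} = \mathrm{MPC}(\widehat{P})$ is $\epsilon_{\mathrm{oc}}$-optimal against any competitor in $\Pi$ on the model $\widehat{P}$, we have $(\mathrm{B}) \leq \epsilon_{\mathrm{oc}}$.

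For the two model-error terms $(\mathrm{A})$ and $(\mathrm{C})$, I would apply Lemma~\ref{lem:simulation} with $C'=C$ and with the transition pair $(P,\widehat{P})$, choosing the direction in which the visitation distribution is the one induced by the outer policy (so its coverage coefficient will appear in the final bound) and the value function is taken under $\widehat{P}$. This gives, for instance,
\begin{align*}
(\mathrm{A}) = \sum_{h=0}^{H-1}\EE_{s,a\sim d_{h,\mu,P}^{\widehat{\pi}}}\Bigl[\EE_{s'\sim P(\cdot|s,a)} V^{\widehat{\pi},\widehat{P}}_{h+1}(s') - \EE_{s'\sim\widehat{P}(\cdot|s,a)} V^{\widehat{\pi},\widehat{P}}_{h+1}(s')\Bigr].
\end{align*}
Since $V^{\widehat{\pi},\widehat{P}}_{h+1}\in[0,H]$, Hölder's inequality bounds each summand by $H\,\|\widehat{P}(\cdot|s,a)-P(\cdot|s,a)\|_1$, and collapsing $\sum_h d_{h,\mu,P}^{\widehat{\pi}} = H\cdot d_{\mu,P}^{\widehat{\pi}}$ yields
\begin{align*}
(\mathrm{A}) \leq H^2\, \EE_{s,a\sim d_{\mu,P}^{\widehat{\pi}}}\bigl[\|\widehat{P}(\cdot|s,a)-P(\cdot|s,a)\|_1\bigr].
\end{align*}
The identical argument with $\pi'$ in place of $\widehat{\pi}$ bounds $(\mathrm{C})$ by $H^2\,\EE_{d_{\mu,P}^{\pi'}}[\|\widehat{P}-P\|_1]$.

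Finally I would use the definition $c_{\nu}^{\pi} = \sup_{s,a} d_{\mu}^{\pi}(s,a)/\nu(s,a)$ to perform a pointwise change of measure: for any nonnegative $f$, $\EE_{d_{\mu}^{\pi}}[f] \leq c_{\nu}^{\pi}\,\EE_{\nu}[f]$. Applied with $f(s,a)=\|\widehat{P}(\cdot|s,a)-P(\cdot|s,a)\|_1$, this converts both $(\mathrm{A})$ and $(\mathrm{C})$ into expectations under $\nu$, and summing the three pieces yields the claimed inequality.

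The main obstacle is the bookkeeping inside the simulation lemma: one must pick the direction carefully so that the visitation distribution and the value function end up under the two different dynamics in the convenient way (otherwise the coverage coefficient of the wrong policy would appear, or the value would have to be controlled under $P$ rather than $\widehat{P}$). The factor $H^2$ then falls out cleanly, with one $H$ coming from the range of the value function and one $H$ from converting the per-step visitations into the averaged distribution $d_{\mu}^{\pi}$; everything else is a standard application of the triangle inequality and monotonicity of expectation.
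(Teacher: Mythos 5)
Your proof is correct and follows essentially the same route as the paper's: the identical three-term decomposition through the value under $\widehat{P}$, the Optimal Control Oracle for the middle term, the simulation lemma plus H\"older's inequality (with $V^{\pi,\widehat{P}}_h \in [0,H]$) for the two model-mismatch terms, and a change of measure to $\nu$ via the coverage coefficients, yielding the $H^2(c_{\nu}^{\widehat{\pi}}+c_{\nu}^{\pi'})$ factor. The only differences are immaterial bookkeeping choices (applying H\"older before rather than after the change of measure), and your handling of the simulation-lemma direction is, if anything, cleaner than the paper's intermediate displays.
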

\begin{proof}
For simplicity, denote $\widehat{V}^{\pi} = V^{\pi,\widehat{P}}$, we have
\begin{align*}
    J_{\mu}(\widehat{\pi}) - J_{\mu}(\pi') &= \EE_{s \sim \mu}[V^{\widehat{\pi}}(s) - V^{\pi'}(s)] \\
    &= \EE_{s \sim \mu}[V^{\widehat{\pi}}(s) - \widehat{V}^{\widehat{\pi}}(s)] + 
    \EE_{s \sim \mu}[\widehat{V}^{\widehat{\pi}}(s) - \widehat{V}^{\pi'}(s)] +
    \EE_{s \sim \mu}[\widehat{V}^{\pi'}(s) - V^{\pi'}(s)] \\
    &= \EE_{s \sim \mu}[\widehat{V}^{\widehat{\pi}}(s) - \widehat{V}^{\pi'}(s)] +
    \EE_{s \sim \mu}[V^{\pi'}(s) - \widehat{V}^{\widehat{\pi}}(s)] + 
    \EE_{s \sim \mu}[\widehat{V}^{\pi'}(s) - V^{\pi'}(s)] \\
    &\leq \epsilon_{oc} + \sum_{h=0}^{H-1} \mathbb{E}_{s,a\sim d_h^{\widehat{\pi}}} \left[ \mathbb{E}_{s'\sim \widehat{P}(\cdot | s,a)}\widehat{V}^{\widehat{\pi}}_h(s') - \EE_{s'\sim P(\cdot|s,a)} \widehat{V}^{\widehat{\pi}}_h(s') \right] \\ &\qquad
     + \sum_{h=0}^{H-1} \mathbb{E}_{s,a\sim d_h^{\widehat{\pi}}} \left[ \mathbb{E}_{s'\sim \widehat{P}(\cdot | s,a)}\widehat{V}^{\pi'}_h(s') - \EE_{s'\sim P(\cdot|s,a)} \widehat{V}^{\pi'}_h(s') \right] \\
     & \leq \epsilon_{oc} + (c_{\nu}^{\widehat{\pi}} + c_{\nu}^{\pi'})  \sum_{h=0}^{H-1} \mathbb{E}_{s,a\sim \nu} \left[ \mathbb{E}_{s'\sim \widehat{P}(\cdot | s,a)}\widehat{V}^{\pi'}_h(s') - \EE_{s'\sim P(\cdot|s,a)} \widehat{V}^{\pi'}_h(s') \right] \\
     & \leq  \epsilon_{oc} + (c_{\nu}^{\widehat{\pi}} + c_{\nu}^{\pi'})  \sum_{h=0}^{H-1} \mathbb{E}_{s,a\sim \nu} \left [\|\widehat{P}(\cdot | s,a) - P(\cdot | s,a)\|_1 \|\widehat{V}^{\pi'}_h\|_{\infty}\right] \\
     & \leq  \epsilon_{oc} + H^2(c_{\nu}^{\widehat{\pi}} + c_{\nu}^{\pi'})  \mathbb{E}_{s,a\sim \nu} \left [\|\widehat{P}(\cdot | s,a) - P(\cdot | s,a)\|_1 \right] \\
\end{align*}
The first inequality is by simulation lemma, the second is by the definition of the coverage coefficient, and the third is by Holder's inequality, and the last is by the assumption that the costs are bounded by 1.
\end{proof}

With the previous lemma, we are ready to prove Lemma~\ref{lemma:perf_model}:
\begin{lemma} \label{lemma:perf_model} Let $\{\widehat{P}^{(t)}\}_{t=1}^T$ be the learned meta models. Let $\{U(\widehat{P}^{(t)}, \tau_t)\}_{t=1}^T$ be the adapted model after the one-shot adaptations. Let $\boldsymbol{\widehat{\pi}} = \{\widehat{\pi}^{(t)}\}_{t=1}^T$, where $\widehat{\pi}^{(t)} := MPC(U_t(\widehat{P}^{(t)}))$. Let $\rho_t := \frac{1}{2}d_{P^{(t)}}^{\widehat{\pi}^{(t)}} + \frac{1}{2}\nu_t$ be the state-action distribution induced by our algorithm under $\Mcal^{(t)}$. Then for policy sequence $\boldsymbol{\pi'} = \{\pi'^{(t)}\}_{t=1}^T$, we have:
\begin{align*}
    J_{\mu}(\boldsymbol{\widehat{\pi}}) - J_{\mu}(\boldsymbol{\pi'})  &\leq {\epsilon}_{oc} +  2\max_t(c_{\nu_i}^{\pi'^{(t)}})H^2\frac{1}{T} \sum_{t=1}^{T}  \mathbb{E}_{s,a\sim \rho_t} \|U(\widehat{P}^{(t)}, \tau_t)(\cdot | s,a) 
    - P^{(t)}(\cdot | s,a)\|_1 
\end{align*}
\end{lemma}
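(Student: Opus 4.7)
The plan is to reduce the multi-round bound to $T$ applications of the single-policy lemma already proved in the excerpt and then collapse the resulting expectations into a single max-over-$t$ coverage coefficient. First I would write the LHS as an average,
\begin{align*}
J_{\mu}(\boldsymbol{\widehat{\pi}}) - J_{\mu}(\boldsymbol{\pi'}) = \frac{1}{T}\sum_{t=1}^T \Bigl( \EE_{s\sim\mu}[V^{\widehat{\pi}^{(t)},P^{(t)}}(s)] - \EE_{s\sim\mu}[V^{\pi'^{(t)},P^{(t)}}(s)] \Bigr),
\end{align*}
and for each round $t$ invoke the single-policy Lemma (the first Lemma~\ref{lemma:perf_model}) with $\widehat{P} = U(\widehat{P}^{(t)}, \tau_t)$, $\widehat{\pi} = \widehat{\pi}^{(t)}$, $\pi' = \pi'^{(t)}$, ground-truth transition $P^{(t)}$, and exploration distribution $\nu := \rho_t$. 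This produces, for each $t$, a bound
\begin{align*}
\epsilon_{oc} + H^2 \bigl( c_{\rho_t}^{\widehat{\pi}^{(t)}} + c_{\rho_t}^{\pi'^{(t)}} \bigr)\, \EE_{s,a\sim\rho_t}\bigl\| U(\widehat{P}^{(t)}, \tau_t)(\cdot|s,a) - P^{(t)}(\cdot|s,a) \bigr\|_1.
\end{align*}

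The heart of the argument is then to control the two coverage coefficients using the fact that $\rho_t = \tfrac{1}{2}d_{P^{(t)}}^{\widehat{\pi}^{(t)}} + \tfrac{1}{2}\nu_t$ is a mixture that dominates both of its components up to a factor of two. Specifically, since $\rho_t(s,a) \geq \tfrac12 d_{P^{(t)}}^{\widehat{\pi}^{(t)}}(s,a)$ for every $(s,a)$, I get $c_{\rho_t}^{\widehat{\pi}^{(t)}} \leq 2$ directly from the definition, without appealing to any property of $\pi'^{(t)}$. Likewise, since $\rho_t(s,a) \geq \tfrac12 \nu_t(s,a)$, I obtain $c_{\rho_t}^{\pi'^{(t)}} \leq 2\, c_{\nu_t}^{\pi'^{(t)}}$. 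Replacing each per-round coefficient by $\max_t c_{\nu_t}^{\pi'^{(t)}}$ and using that this maximum is at least one (any occupancy measure has sup-ratio against another probability measure at least one on average), I can absorb the extra additive term and bound the sum by a constant multiple of $\max_t c_{\nu_t}^{\pi'^{(t)}} H^2$, matching the statement up to the stated constant.

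Averaging over $t$ leaves exactly the form $\tfrac{1}{T}\sum_t \EE_{s,a\sim \rho_t}\|\cdot\|_1$ on the right-hand side and $\epsilon_{oc}$ preserved by averaging, which concludes the proof. The main obstacle I anticipate is bookkeeping around the coverage coefficients: the single-policy lemma introduces a coefficient for \emph{both} $\widehat{\pi}^{(t)}$ and $\pi'^{(t)}$, but the statement we are trying to prove only retains a coefficient for $\pi'^{(t)}$. The mixture construction of $\rho_t$ is precisely what makes this possible, because it lets us discharge the $\widehat{\pi}^{(t)}$ coverage factor by a universal constant while still paying only a factor-of-two inflation on the $\pi'^{(t)}$ factor. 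Nothing else in the argument is subtle; the simulation-lemma and MPC-oracle work was already done inside the single-policy lemma, so this proof is essentially a careful per-round aggregation.
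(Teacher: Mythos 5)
Your proposal is correct in substance and follows essentially the same route as the paper: decompose the average over rounds, run the single-policy argument per round, and use the mixture structure of $\rho_t = \tfrac{1}{2}d_{P^{(t)}}^{\widehat{\pi}^{(t)}} + \tfrac{1}{2}\nu_t$ to discharge the coverage coefficient of $\widehat{\pi}^{(t)}$ while paying only $c_{\nu_t}^{\pi'^{(t)}}$ for $\pi'^{(t)}$. The one place you are lossier than the paper is the final aggregation of the two coverage terms: bounding each mixture component separately gives $c_{\rho_t}^{\widehat{\pi}^{(t)}} + c_{\rho_t}^{\pi'^{(t)}} \leq 2 + 2c_{\nu_t}^{\pi'^{(t)}} \leq 4\,c_{\nu_t}^{\pi'^{(t)}}$, i.e.\ a constant $4$ rather than the stated $2$. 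The paper instead multiplies the $\widehat{\pi}^{(t)}$-occupancy term by $c_{\nu_t}^{\pi'^{(t)}} \geq 1$ \emph{before} combining, so that $\mathbb{E}_{d^{\widehat{\pi}^{(t)}}}[f] + c_{\nu_t}^{\pi'^{(t)}}\mathbb{E}_{\nu_t}[f] \leq c_{\nu_t}^{\pi'^{(t)}}\bigl(\mathbb{E}_{d^{\widehat{\pi}^{(t)}}}[f] + \mathbb{E}_{\nu_t}[f]\bigr) = 2c_{\nu_t}^{\pi'^{(t)}}\mathbb{E}_{\rho_t}[f]$ exactly, recovering the constant $2$. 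This is a factor-of-two bookkeeping difference with no effect on the downstream $\Tilde{O}(1/\sqrt{T})$ result, but as written your argument proves the lemma only with a worse constant than the one stated.
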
\begin{proof}
\begin{align*}
     J_{\mu}(\boldsymbol{\widehat{\pi}}) - J_{\mu}(\boldsymbol{\pi'}) &= \frac{1}{T} \sum_{t=1}^{T} \left( J_{\mu}(\widehat{\pi}^{(t)}) - J_{\mu}(\pi'^{(t)}) \right) \\
    &\leq \epsilon_{oc} + \frac{H}{T}  \sum_{t=1}^{T} \sum_{h=0}^{H-1}\bigg( \mathbb{E}_{s,a\sim d^{\widehat{\pi}^{(t)}}_{h,P^{(t)}}} \left [\|U(\widehat{P}^{(t)}, \tau_t)(\cdot | s,a) - P^{(t)}(\cdot | s,a)\|_1 \right] \\ &\qquad\qquad\qquad+ \mathbb{E}_{s,a\sim d^{\pi'}_{h,P^{(t)}}} \left [\|U(\widehat{P}^{(t)}, \tau_t)(\cdot | s,a) - P^{(t)}(\cdot | s,a)\|_1 \right]\bigg)\\
    & \leq \epsilon_{oc} + \frac{H}{T}  \sum_{t=1}^{T} \sum_{h=0}^{H-1}\bigg( \mathbb{E}_{s,a\sim d^{\widehat{\pi}^{(t)}}_{h,P^{(t)}}} \left [\|U(\widehat{P}^{(t)}, \tau_t)(\cdot | s,a) - P^{(t)}(\cdot | s,a)\|_1 \right] \\ &\qquad\qquad\qquad+ c_{\nu_t}^{\pi'^{(t)}} \mathbb{E}_{s,a\sim \nu_t} \left [\|U(\widehat{P}^{(t)}, \tau_t)(\cdot | s,a) - P^{(t)}(\cdot | s,a)\|_1 \right]\bigg)\\
    & \leq \epsilon_{oc} + \frac{H}{T}  \sum_{t=1}^{T} \sum_{h=0}^{H-1}\bigg( c_{\nu_t}^{\pi'^{(t)}}  \mathbb{E}_{s,a\sim d^{\widehat{\pi}^{(t)}}_{h,P^{(t)}}} \left [\|U(\widehat{P}^{(t)}, \tau_t)(\cdot | s,a) - P^{(t)}(\cdot | s,a)\|_1 \right] \\ &\qquad\qquad\qquad+ c_{\nu_t}^{\pi'^{(t)}} \mathbb{E}_{s,a\sim \nu_t} \left [\|U(\widehat{P}^{(t)}, \tau_t)(\cdot | s,a) - P^{(t)}(\cdot | s,a)\|_1 \right]\bigg)\\
    & = \epsilon_{oc} + \frac{2H^2}{T}  \sum_{t=1}^{T} c_{\nu_t}^{\pi'^{(t)}}  \sum_{h=0}^{H-1}\bigg(   \frac{1}{2H}\mathbb{E}_{s,a\sim d^{\widehat{\pi}^{(t)}}_{h,P^{(t)}}} \left [\|U(\widehat{P}^{(t)}, \tau_t)(\cdot | s,a) - P^{(t)}(\cdot | s,a)\|_1 \right] \\ &\qquad\qquad\qquad+  \frac{1}{2H} \mathbb{E}_{s,a\sim \nu_t} \left [\|U(\widehat{P}^{(t)}, \tau_t)(\cdot | s,a) - P^{(t)}(\cdot | s,a)\|_1 \right]\bigg)\\
    &=  \epsilon_{oc} + \frac{2H^2}{T}  \sum_{t=1}^{T} c_{\nu_t}^{\pi'^{(t)}} \bigg(\mathbb{E}_{s,a\sim \rho_t} \left [\|U(\widehat{P}^{(t)}, \tau_t)(\cdot | s,a) - P^{(t)}(\cdot | s,a)\|_1 \right] \bigg)\\
    & \leq \epsilon_{oc} + \frac{2\max_t(c_{\nu_t}^{\pi'^{(t)}})H^2}{T}  \sum_{t=1}^{T}  \left(\mathbb{E}_{s,a\sim \rho_t} \left [\|U(\widehat{P}^{(t)}, \tau_t)(\cdot | s,a) - P^{(t)}(\cdot | s,a)\|_1 \right] \right)\\
\end{align*}
\end{proof}

\begin{lemma}[Regret bound of FTML]
Suppose that $\ell^{(t)}$ are strongly convex. With $\alpha_{\text{adapt}}$ and $\alpha_{\text{meta}}$ properly chosen, FTML has the regret
\begin{align*}
    R_{FTML}(T) = \sum_{t=1}^T \ell^{(t)}(U(\widehat{P}^{(t)}, \tau_t)) - \min_{P \in \Pcal} \sum_{t=1}^T \ell^{(t)}(U(P, \tau_t)) \leq O(\log(T)).
\end{align*}
\label{lemma:ftml}
\end{lemma}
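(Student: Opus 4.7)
The plan is to recognize that FTML is essentially Follow-the-Leader (FTL) applied to a sequence of \emph{composite} losses, and then invoke the classical $O(\log T)$ regret bound for FTL on strongly convex sequences, as was done by \cite{finn2019online}. Concretely, I would first define the per-round composite loss $\tilde{\ell}^{(t)}(P) := \ell^{(t)}(U(P,\tau_t))$, where $U(P,\tau_t) = P - \alpha_{\text{adapt}} \nabla \hat{\ell}_{\tau_t}(P)$ is the one-shot adapted model. Observing Eq.~\ref{eq:ftml}, the update $\widehat{P}^{(t+1)} = \argmin_{P \in \Pcal} \sum_{i=1}^t \tilde{\ell}^{(i)}(P)$ is exactly the FTL rule on the sequence $\{\tilde{\ell}^{(t)}\}_{t=1}^{T}$, so the FTML regret in the statement coincides with the FTL regret for this composite sequence.

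The next step is to verify that each composite loss $\tilde{\ell}^{(t)}$ is itself strongly convex (with some modulus $\tilde{\mu} > 0$) and has a bounded gradient. Since $U$ is an affine map in $P$ of the form $P \mapsto (I - \alpha_{\text{adapt}} H_t) P + b_t$ when $\ell^{(t)}$ is quadratic (or more generally a near-isometry for $\alpha_{\text{adapt}}$ sufficiently small and $\ell^{(t)}$ sufficiently smooth), the composition $\ell^{(t)} \circ U$ inherits strong convexity, with a modulus controlled by $(1 - \alpha_{\text{adapt}} L)^2 \mu$ where $L, \mu$ are smoothness/strong-convexity constants of $\ell^{(t)}$. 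This is the argument carried out in Proposition~3.1 of \cite{finn2019online} for FTML with gradient-based adaptation, and choosing $\alpha_{\text{adapt}}$ small enough (e.g., $\alpha_{\text{adapt}} < 1/L$) together with $\alpha_{\text{meta}}$ matched to $\tilde{\mu}$ guarantees the required conditions.

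Finally, I would invoke the standard FTL regret bound for strongly convex losses (e.g., Lemma 2.8 in Shalev-Shwartz's online learning survey \cite{shalev2011online}): if each $\tilde{\ell}^{(t)}$ is $\tilde{\mu}$-strongly convex and $G$-Lipschitz on $\Pcal$, then
\begin{align*}
\sum_{t=1}^T \tilde{\ell}^{(t)}(\widehat{P}^{(t)}) - \min_{P \in \Pcal}\sum_{t=1}^T \tilde{\ell}^{(t)}(P) \;\leq\; \frac{G^2}{2\tilde{\mu}} \bigl(1 + \log T\bigr).
\end{align*}
Since $\widehat{P}^{(t)}$ is the FTML iterate and $\tilde{\ell}^{(t)}(\widehat{P}^{(t)}) = \ell^{(t)}(U(\widehat{P}^{(t)},\tau_t))$, this gives exactly $R_{\mathrm{FTML}}(T) \leq O(\log T)$ as claimed.

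The main obstacle will be the second step: propagating strong convexity through the adaptation operator $U$. Gradient-based $U$ is linear only in the quadratic case, and for generic strongly convex $\ell^{(t)}$ one needs both smoothness of $\ell^{(t)}$ and a small-enough adaptation step $\alpha_{\text{adapt}}$ to guarantee that the Jacobian of $U$ stays bounded away from singularity, so that $\ell^{(t)} \circ U$ remains strongly convex with a non-trivial modulus. A secondary subtlety is the dependence on $\tau_t$: because $\tau_t$ is random (sampled from a policy depending on past iterates), one must ensure the strong convexity and Lipschitz constants hold uniformly over realizations of $\tau_t$, which is where the regularity assumptions on $\Pcal$ and the realizability-style Assumption~\ref{assump:model} come into play. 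Once these structural conditions are in place, the remainder of the proof is the standard FTL analysis and yields the stated $O(\log T)$ bound.
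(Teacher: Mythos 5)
Your proposal is correct and follows essentially the same route as the paper, which simply defers this lemma to \cite{finn2019online}: you reconstruct exactly that argument, viewing FTML as FTL on the composite losses $\ell^{(t)}\circ U$, establishing strong convexity of the composite via a small enough $\alpha_{\text{adapt}}$, and invoking the standard logarithmic FTL regret bound for strongly convex losses. You also correctly identify the one genuinely delicate step (propagating strong convexity through $U$), which is precisely the technical content of the cited analysis.
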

For proof we refer the reader to \cite{finn2019online}.

The final step left to prove the main theorem remains to bound the model error. With the above lemma, we show another lemma that leads to the main result:

\begin{lemma}
Let $\widehat{\pi}^{(t)} := MPC\left(U(\widehat{P}^{(t)},\tau_t)\right)$. Let $\rho_t := \frac{1}{2}d_{P^{(t)}}^{\widehat{\pi}^{(t)}} + \frac{1}{2}\nu_t$ be the state-action distribution induced by our algorithm under $\Mcal^{(t)}$. We have
\[ \frac{1}{T} \sum_{t=1}^T \mathbb{E}_{s,a\sim \rho_t} \left [\|U(\widehat{P}^{(t)},\tau_t)(\cdot | s,a) - P^{(t)}(\cdot | s,a)\|_1 \right]  \leq \Tilde{O}\left(\frac{1}{\sqrt{T}}\right).\] 
\label{lemma:combine_model_error}
\end{lemma}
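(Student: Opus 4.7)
The plan is to convert the $\ell_1$ model error on the left-hand side into the KL surrogate $\ell^{(t)}$, at which point the FTML regret guarantee (Lemma~\ref{lemma:ftml}) together with the agnostic assumption (Assumption~\ref{assump:model}) delivers the desired average rate. The skeleton is the standard Pinsker $\to$ Jensen $\to$ Cauchy--Schwarz $\to$ FTML $\to$ realizability chain.

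First, for each $t$ I would apply Pinsker's inequality pointwise in $(s,a)$, obtaining $\|U(\widehat{P}^{(t)},\tau_t)(\cdot|s,a) - P^{(t)}(\cdot|s,a)\|_1 \leq \sqrt{2\,D_{KL}(U(\widehat{P}^{(t)},\tau_t)(\cdot|s,a)\,\|\,P^{(t)}(\cdot|s,a))}$, then move the expectation $\EE_{s,a\sim\rho_t}$ inside the square root using Jensen's inequality (concavity of $\sqrt{\cdot}$), yielding
$$\EE_{s,a\sim\rho_t}\left[\|U(\widehat{P}^{(t)},\tau_t)(\cdot|s,a) - P^{(t)}(\cdot|s,a)\|_1\right] \leq \sqrt{2\,\ell^{(t)}(U(\widehat{P}^{(t)},\tau_t))}.$$
Second, summing over $t$ and applying Cauchy--Schwarz to the outer average gives $\frac{1}{T}\sum_t \sqrt{2\,\ell^{(t)}(U(\widehat{P}^{(t)},\tau_t))} \leq \sqrt{\frac{2}{T}\sum_t \ell^{(t)}(U(\widehat{P}^{(t)},\tau_t))}$. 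Third, Lemma~\ref{lemma:ftml} bounds the cumulative KL loss by $\min_{P\in\Pcal}\sum_t \ell^{(t)}(U(P,\tau_t)) + O(\log T)$, and the agnostic assumption (applied at each $t$ with $d=\rho_t$) bounds the comparator by $T\epsilon_{\mathrm{model}}$. Combining, the average loss is at most $\epsilon_{\mathrm{model}} + O(\log T / T)$, so its square root is $O(\sqrt{\epsilon_{\mathrm{model}}}) + \Tilde{O}(1/\sqrt{T})$. Strictly speaking, the lemma as stated appears to absorb the $\sqrt{\epsilon_{\mathrm{model}}}$ piece into the right-hand side; tracking it explicitly is what produces the stand-alone $\max_t(c_{\nu_t}^{\pi'^{(t)}}) H^2 \sqrt{\epsilon_{\mathrm{model}}}$ term in Theorem~\ref{thm:main}.

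The main technical obstacle is the gap between what FTML in Eq.~\ref{eq:ftml} actually optimizes (the \emph{empirical} log-likelihood over the trajectories in $D_n$) and the \emph{population} KL losses $\ell^{(t)}$ that Lemma~\ref{lemma:ftml} measures regret against. Because $\rho_t$ depends on $\widehat{P}^{(t)}$ and on all earlier data, the per-round empirical losses form a martingale difference sequence rather than i.i.d.\ observations, so passing from the empirical FTML regret to the expected KL regret requires a uniform-convergence argument over $\Pcal$ combined with a Freedman/Azuma-type martingale concentration inequality applied to the log-likelihood differences between any two candidates in the model class. This concentration step is where the additional logarithmic (and, for parametric $\Pcal$, $\log$-covering-number) factors enter the bound, all of which are absorbed into $\Tilde{O}$; the remaining analytic backbone (Pinsker, Jensen, Cauchy--Schwarz, FTML regret, realizability) is standard.
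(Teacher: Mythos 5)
Your proposal is correct and follows essentially the same chain as the paper's own proof (FTML regret via Lemma~\ref{lemma:ftml} plus Assumption~\ref{assump:model} to bound the cumulative KL, then Pinsker and Jensen to convert to $\ell_1$), just with the steps applied in the opposite order. You also correctly flag two points the paper itself glosses over: the lemma statement silently absorbs the $\sqrt{\epsilon_{\text{model}}}$ term that reappears in Theorem~\ref{thm:main}, and the paper invokes the FTML regret bound directly on the population losses $\ell^{(t)}$ without addressing the empirical-to-population concentration gap you describe.
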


\begin{proof}
Let the loss function $\ell^{(t)}$ be 
\[\ell^{(t)}(P) = \EE_{s,a \sim \rho_t} \left[ \EE_{s' \sim P^{(t)}(\cdot|s,a)}\left[ -\log(P(s'|s,a)) \right] \right].\]
Since $\{\widehat{P}^{(t)}\}_{t=1}^T$ is obtained by running FTML on the sequence of strongly convex function $\{\ell^{(t)}\}_{t=1}^T$, we have by Lemma~\ref{lemma:ftml} that 
\begin{align*}
    \sum_{t=1}^T \ell^{(t)}\left(U(\widehat{P}^{(t)},\tau_t)\right) & \leq \min_{P \in \mathcal{P}}\sum_{t=1}^T \ell^{(t)}\left(U(\widehat{P}^{(t)},\tau_t)\right) + O(\log(T)) \\
    \sum_{t=1}^T \ell^{(t)}\left(U(\widehat{P}^{(t)},\tau_t)\right) + \EE_{s,a \sim \rho_t} \EE_{s' \sim P(\cdot|s,a)} \log(P^{(t)}(s'|s,a)) & \leq \min_{P \in \mathcal{P}}\sum_{t=1}^T \ell^{(t)}\left(U(\widehat{P}^{(t)},\tau_t)\right) \\&\qquad +  \EE_{s,a \sim \rho_t} \EE_{s' \sim P(\cdot|s,a)} \log(P^{(t)}(s'|s,a)) + O(\log(T)) \\
    \sum_{t=1}^T \EE_{s,a \sim \rho_t} D_{KL} \left(U(\widehat{P}^{(t)},\tau_t)(\cdot | s,a),P^{(t)}(\cdot | s,a)\right) & \leq \min_{P \in \mathcal{P}} \sum_{t=1}^T \EE_{s,a \sim \rho_t} D_{KL} \left(U(P,\tau_t)(\cdot | s,a),P^{(t)}(\cdot | s,a)\right) \\& \;\; + O(\log(T)) \\
    \sum_{t=1}^T \EE_{s,a \sim \rho_t}  D_{KL} \left(U(\widehat{P}^{(t)},\tau_t)(\cdot | s,a),P^{(t)}(\cdot | s,a)\right) & \leq T\epsilon_{\text{model}}+  O(\log(T)), \\
\end{align*}
where the last line is obtained by Assumption~\ref{assump:model}.

Finally by  Pinsker’s inequality and Jensen's inequality, 
\begin{align*}
    \frac{1}{T} \sum_{t=1}^T \mathbb{E}_{s,a\sim \rho_t}  \EE_{s' \sim P(\cdot|s,a)}\left [\|U(\widehat{P}^{(t)},\tau_t)(s' | s,a) - P^{(t)}(s' | s,a)\|_1 \right]  & \leq  \frac{1}{T} \sum_{t=1}^T\sqrt{2 \EE_{s,a \sim \rho_t} D_{KL} \left(U(\widehat{P}^{(t)}, \tau_t),P^{(t)}\right)} \\
    & \leq \sqrt{ \frac{1}{T} \sum_{t=1}^T 2 \EE_{s,a \sim \rho_t} D_{KL} \left(U(\widehat{P}^{(t)},\tau_t),P^{(t)}\right)} \\
    &\leq \sqrt{\epsilon_{\text{model}}} +  \Tilde{O}\left(\frac{1}{\sqrt{T}}\right).
\end{align*}   
\end{proof}

Then apply Lemma~\ref{lemma:combine_model_error} on Lemma~\ref{lemma:perf_model}, we obtain our main theorem:
\begin{theorem}[Main Theorem]
Let $\{\widehat{P}^{(t)}\}_{t=1}^T$ be the learned meta models. Let $\{U(\widehat{P}^{(t)}, \tau_t)\}_{t=1}^T$ be the adapted model after the one-shot adaptations. Let $\boldsymbol{\widehat{\pi}} = \{\widehat{\pi}^{(t)}\}_{t=1}^T$, where $\widehat{\pi}^{(t)} := MPC(U_t(\widehat{P}^{(t)}))$. Let $\rho_t := \frac{1}{2}d_{P^{(t)}}^{\widehat{\pi}^{(t)}} + \frac{1}{2}\nu_t$ be the state-action distribution induced by our algorithm under $\Mcal^{(t)}$. Then for any meta policy $\pi'$, we have:
\begin{align*}
     J_{\mu}(\boldsymbol{\widehat{\pi}}) - J_{\mu}(\boldsymbol{\pi'})  \leq \epsilon_{oc} + \max_t(c_{\nu_t}^{\pi'^{(t)}})H^2 \sqrt{\epsilon_{\text{model}}} +\Tilde{O}\left( \frac{\max_t(c_{\nu_t}^{\pi'^{(t)}}) H^2}{\sqrt{T}} \right)
\end{align*}
\end{theorem}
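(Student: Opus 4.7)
My plan is to bound the regret by reducing the performance difference to an averaged model error and then controlling that model error via the no-regret property of FTML.

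First, I would tackle a single-round version: for any fixed $t$, bound $J_\mu(\widehat{\pi}^{(t)}) - J_\mu(\pi'^{(t)})$ by $\epsilon_{oc}$ plus a term involving the $L_1$ discrepancy between the adapted model $U(\widehat{P}^{(t)},\tau_t)$ and the true dynamics $P^{(t)}$. The key tool is the simulation lemma (Lemma~\ref{lem:simulation}): I add and subtract the value of each policy under the learned model, which splits the gap into (i) a model-vs-truth mismatch along the rollouts of $\widehat{\pi}^{(t)}$, (ii) a planning suboptimality on the learned model, and (iii) another model mismatch along the rollouts of $\pi'^{(t)}$. Term (ii) is at most $\epsilon_{oc}$ by assumption. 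For (i) and (iii), I apply the simulation lemma, H\"older to bound $\|V\|_\infty \leq H$, and use the coverage coefficient $c_{\nu_t}^{\pi'^{(t)}}$ to exchange the rollout distribution of $\pi'^{(t)}$ for the exploration distribution $\nu_t$. A factor of $H$ comes from summing over the horizon, yielding the $H^2$ scaling.

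Next I would average over $t=1,\dots,T$. Recognizing that $\rho_t = \tfrac{1}{2} d_{P^{(t)}}^{\widehat{\pi}^{(t)}} + \tfrac{1}{2}\nu_t$ is precisely the data-collection distribution used in the algorithm, I rewrite the two model-error terms as one expectation under $\rho_t$ (up to a factor of $2$). Taking the max of the coverage coefficient out of the sum then gives the intermediate bound in Lemma~\ref{lemma:perf_model}: the gap is at most $\epsilon_{oc}$ plus $2\max_t(c_{\nu_t}^{\pi'^{(t)}}) H^2$ times the time-averaged $L_1$ model error under $\rho_t$.

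The central step is bounding $\tfrac{1}{T}\sum_t \mathbb{E}_{\rho_t}\|U(\widehat{P}^{(t)},\tau_t)(\cdot|s,a) - P^{(t)}(\cdot|s,a)\|_1$. Here I define the per-round loss $\ell^{(t)}(P) = \mathbb{E}_{\rho_t}[D_{KL}(P(\cdot|s,a), P^{(t)}(\cdot|s,a))]$, observe that minimizing $\ell^{(t)}(U(P,\tau_t))$ with respect to $P$ is equivalent to the MLE update~\eqref{eq:ftml} (a constant, the entropy of $P^{(t)}$, can be added since it does not depend on $P$), and then invoke the FTML regret bound of Lemma~\ref{lemma:ftml} to get $\sum_t \ell^{(t)}(U(\widehat{P}^{(t)},\tau_t)) \leq \min_{P\in\mathcal{P}} \sum_t \ell^{(t)}(U(P,\tau_t)) + O(\log T)$. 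The agnostic assumption (Assumption~\ref{assump:model}) bounds the comparator by $T\epsilon_{\text{model}}$. Finally, I pass from averaged KL to averaged $L_1$ using Pinsker's inequality followed by Jensen (moving the square root outside the average), producing a $\sqrt{\epsilon_{\text{model}}} + \widetilde{O}(1/\sqrt{T})$ bound. Plugging this into Lemma~\ref{lemma:perf_model} yields the theorem.

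The step I expect to require the most care is the FTML invocation: it implicitly requires that the losses $\ell^{(t)}$ (as functions of the meta-parameter $P$ passed through the one-step adaptation $U$) are strongly convex and that the learning rates $\alpha_{\text{adapt}}, \alpha_{\text{meta}}$ are set appropriately, which is a nontrivial assumption for neural-network parameterizations but standard in the FTML analysis cited. A secondary subtlety is the $\log T$ hidden inside the $\widetilde{O}(\cdot)$ once we take square roots and combine with Pinsker; I would just verify the order of operations (Pinsker before Jensen) so that the final dependence on $T$ remains $1/\sqrt{T}$ up to logarithmic factors.
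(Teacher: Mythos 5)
Your proposal is correct and follows essentially the same route as the paper's own proof: the simulation-lemma decomposition with the $\epsilon_{oc}$ planning term and coverage-coefficient change of measure (Lemma~\ref{lemma:perf_model}), the recombination of the two rollout distributions into $\rho_t$ with the factor of $2$, and the FTML regret bound plus Assumption~\ref{assump:model} followed by Pinsker and Jensen (Lemma~\ref{lemma:combine_model_error}). The caveats you flag --- the strong-convexity requirement on $\ell^{(t)}\circ U$ for the FTML invocation and applying Pinsker before Jensen to keep the $\Tilde{O}(1/\sqrt{T})$ rate --- are exactly the points the paper's argument relies on.
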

\begin{proof}
This is a direct combination of Lemma~\ref{lemma:perf_model} and Lemma~\ref{lemma:combine_model_error}.
\end{proof}
Finally we can use our main theorem to reach the no-regret conclusion:
\begin{corollary}
Let $\boldsymbol{\widehat{\pi}} = \{\widehat{\pi}^{(t)}\}_{t=1}^T$ be the policies returned by our algorithm. Let $R(T) = J(\boldsymbol{\widehat{\pi}}) - \min_{\pi^\ast \in \Pi}J(\boldsymbol{\pi^\ast})$ be the regret of our algorithm. We have
\begin{align*}
    \lim_{T \to \infty} R(T) = \epsilon_{oc} +\max_t(c_{\nu_t}^{\pi^{\ast(t)}})H^2 \sqrt{\epsilon_{\text{model}}}.
\end{align*}
\end{corollary}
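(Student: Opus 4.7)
The plan is to derive the corollary as a direct specialization of Theorem~\ref{thm:main}. The main theorem already gives a uniform bound on $J_\mu(\boldsymbol{\widehat{\pi}}) - J_\mu(\boldsymbol{\pi'})$ for an arbitrary comparator sequence $\boldsymbol{\pi'}$, so the remaining work is just to plug in the right comparator and take the limit $T \to \infty$. Concretely, I will take $\boldsymbol{\pi'} = \boldsymbol{\pi^\ast}_{P^\star}$, where $P^\star \in \argmin_{P \in \Pcal} J(\boldsymbol{\pi^\ast}_P)$ is the best meta-model in hindsight (which exists provided $\Pcal$ is, say, compact, or else by an $\eta$-approximate minimizer argument that does not affect the asymptotic conclusion).

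The first step is to invoke Theorem~\ref{thm:main} with $\pi'^{(t)} := \pi^{\ast(t)} = \pi^\ast_{U(P^\star,\tau_t)}$, which yields
\begin{equation*}
J(\boldsymbol{\widehat{\pi}}) - \min_{P \in \Pcal} J(\boldsymbol{\pi^\ast}_P) \;\le\; \epsilon_{oc} + \max_t(c_{\nu_t}^{\pi^{\ast(t)}}) H^2 \sqrt{\epsilon_{\text{model}}} + \tilde{O}\!\left(\frac{\max_t(c_{\nu_t}^{\pi^{\ast(t)}}) H^2}{\sqrt{T}}\right).
\end{equation*}
By the definition of $R(T)$ in the corollary statement, the left-hand side is exactly $R(T)$. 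The second step is then to take $T \to \infty$: the only $T$-dependent term is the $\tilde{O}(1/\sqrt{T})$ remainder, which vanishes in the limit (the $\log T$ factors implicit in $\tilde{O}$ are dominated by $\sqrt{T}$), giving the claimed asymptotic upper bound
\[
\lim_{T \to \infty} R(T) \;\le\; \epsilon_{oc} + \max_t(c_{\nu_t}^{\pi^{\ast(t)}}) H^2 \sqrt{\epsilon_{\text{model}}}.
\]

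I do not anticipate a genuinely hard obstacle here, since the work is already done in Theorem~\ref{thm:main} and Lemma~\ref{lemma:combine_model_error}. The only subtle point is the quantifier ordering: the coverage coefficient $\max_t c_{\nu_t}^{\pi^{\ast(t)}}$ depends on the adapted-from-hindsight sequence, so one should verify that this quantity is finite (i.e., each $\pi^{\ast(t)}$ is covered by $\nu_t$), which is the implicit exploration assumption that motivates mixing in $\pi^e$ with probability $1/2$ during data collection. Provided this holds uniformly in $t$, the bound is meaningful. A minor formal point to be careful about is that the statement is written as an equality, but what the main theorem actually delivers is an inequality; I will therefore present the corollary as an upper bound on $\limsup_{T\to\infty} R(T)$, matching the spirit of the ``no-regret up to the irreducible terms $\epsilon_{oc}$ and $\sqrt{\epsilon_{\text{model}}}$'' claim made in the surrounding prose.
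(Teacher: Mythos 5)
Your proposal is correct and follows essentially the same route as the paper: instantiate Theorem~\ref{thm:main} with the comparator sequence induced by the best meta-model in hindsight and let $T \to \infty$ so the $\tilde{O}(1/\sqrt{T})$ term vanishes. Your remark that the conclusion should really be stated as an upper bound on $\limsup_{T\to\infty} R(T)$ rather than an equality is a fair (and correct) tightening of the paper's own phrasing, not a departure from its argument.
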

\begin{proof}
Let $\pi^\ast = min_{\pi^\ast \in \Pi}J(\boldsymbol{\pi^\ast})$. Replace $\boldsymbol{\pi'}$ by $\boldsymbol{\pi^\ast}$ in Thm.~\ref{thm:main} we have
\begin{align*}
     R(T) = \epsilon_{oc} +\max_t(c_{\nu_t}^{\pi^{\ast(t)}})H^2 \sqrt{\epsilon_{\text{model}}} + \Tilde{O}\left( \frac{\max_t(c_{\nu_t}^{\pi^{\ast(t)}}) H^2}{\sqrt{T}} \right).
\end{align*}
Taking $T \to \infty$ completes the proof.
\end{proof}

\newpage
\section{Algorithm}
In this section, we provide the pseudocode of our algorithm.
\label{app:alg}

\begin{algorithm}[h]
	\begin{algorithmic}[1]	
		\REQUIRE  exploration policy $\pi^e$, offline data $D_{\text{off}}$
		\STATE Get warm-started model $\widehat{P}^{(1)}$ with Eq.\ref{eq:offline}
        \FOR{$t = 1, \dots, T$}
            \STATE Collect one trajectory $\tau_t = \{s_h, a_h, s_{h+1} | a_h \sim \pi^e, s_{h+1} \sim P^{(t)}(s_h,a_h) \}_{h=1}^H$
            \STATE Get $U(\widehat{P}^{(t)}, \tau_t)$ with Eq.~\ref{eq:fewshot}.
            \STATE $\widehat{\pi}^{(t)} \xleftarrow{}$ MPC$\left(U(\widehat{P}^{(t)}, \tau_t)\right)$
            \STATE $D_{t} = \{\}$
            \FOR {$k=1, \dots, K$}
                \STATE With prob. $\frac{1}{2}$ rollout using current policy $\widehat{\pi}^{(t)}$: 
                 \STATE $\tau_k = \{s_h, a_h, s_{h+1} | a_h \sim \widehat{\pi}^{(t)}, s_{h+1} \sim P^{(t)}(\cdot|s_h,a_h) \}_{h=1}^H$
                \STATE Otherwise rollout using exploration  policy $\pi^e$: 
                \STATE $\tau_k = \{s_h, a_h, s_{h+1} | a_h \sim \pi^e, s_{h+1} \sim P^{(t)}(\cdot|s_h,a_h) \}_{h=1}^H$
                \STATE $D_{t} \xleftarrow{} D_{t} \cup \tau_k$.
            \ENDFOR
            \STATE Get $\widehat{P}^{(t+1)}$ by Eq.~\ref{eq:ftml}.
        \ENDFOR
        \end{algorithmic}
        \caption{Online Meta Model Based RL}
        \label{alg:alg_box}
\end{algorithm}

\newpage
\section{System Details}
\label{app:system}
\subsection{Town maps}
\label{app:system:map}
In this section, we show the town maps that are used in the development of our system in Fig.~\ref{fig:town_map}.
\begin{figure}[H]
    \centering
    \includegraphics[width=0.19\textwidth]{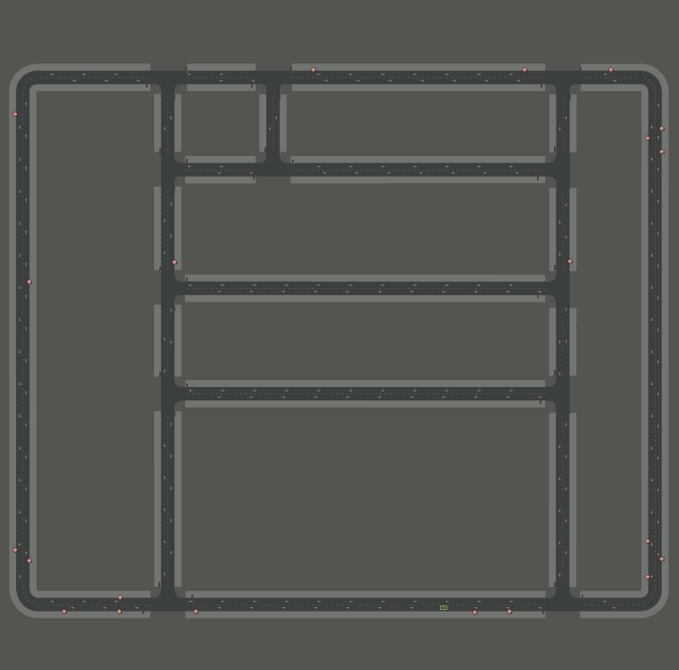}
    \includegraphics[width=0.19\textwidth]{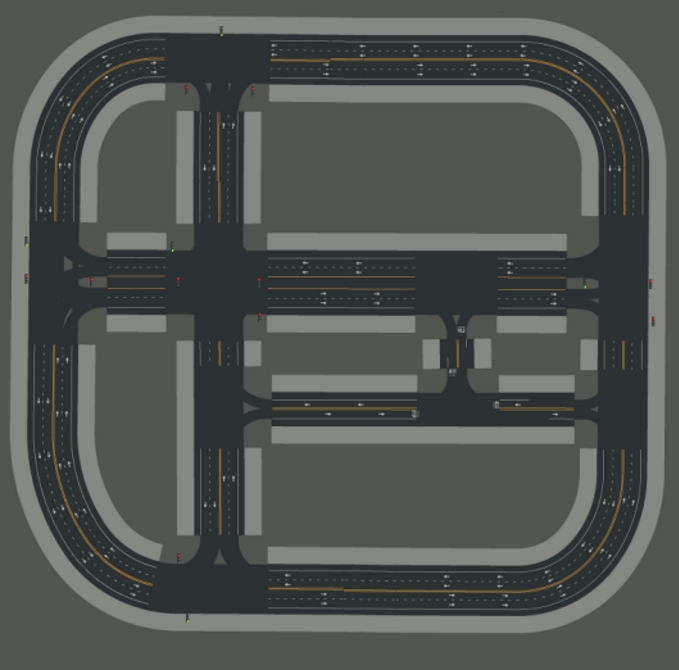}
    \includegraphics[width=0.19\textwidth]{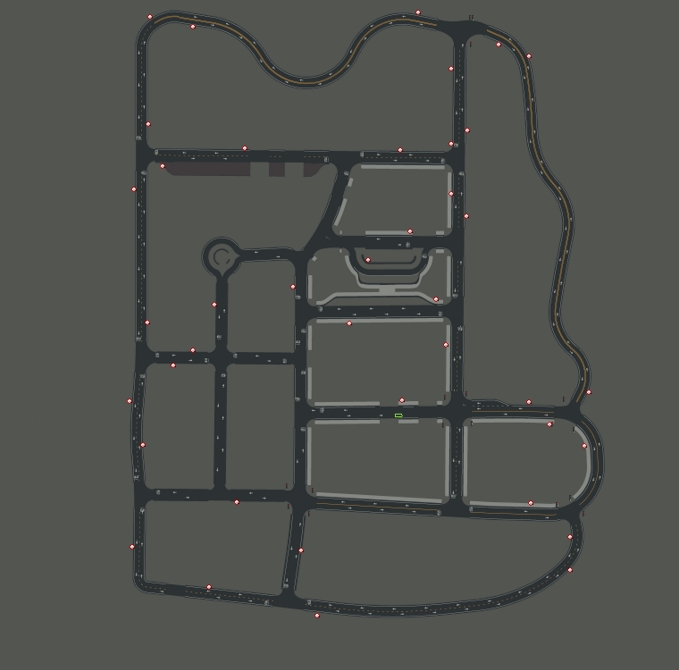}
    \includegraphics[width=0.19\textwidth]{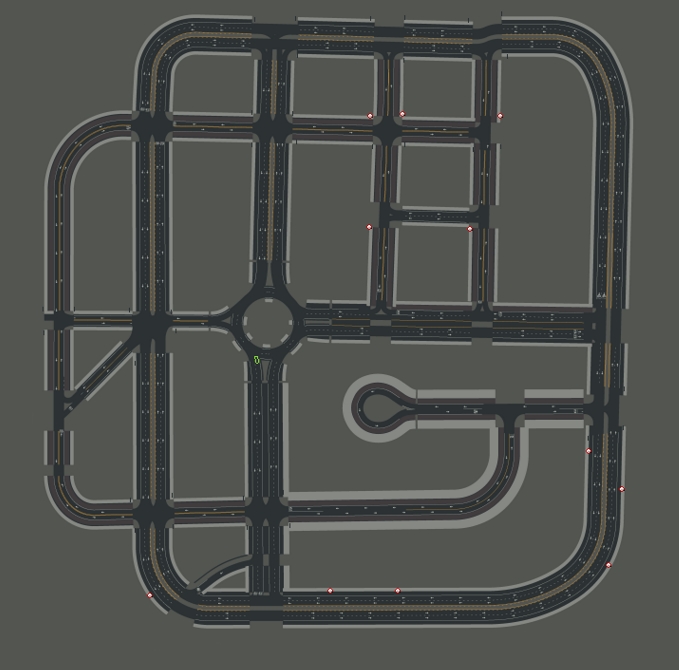}
    \includegraphics[width=0.19\textwidth]{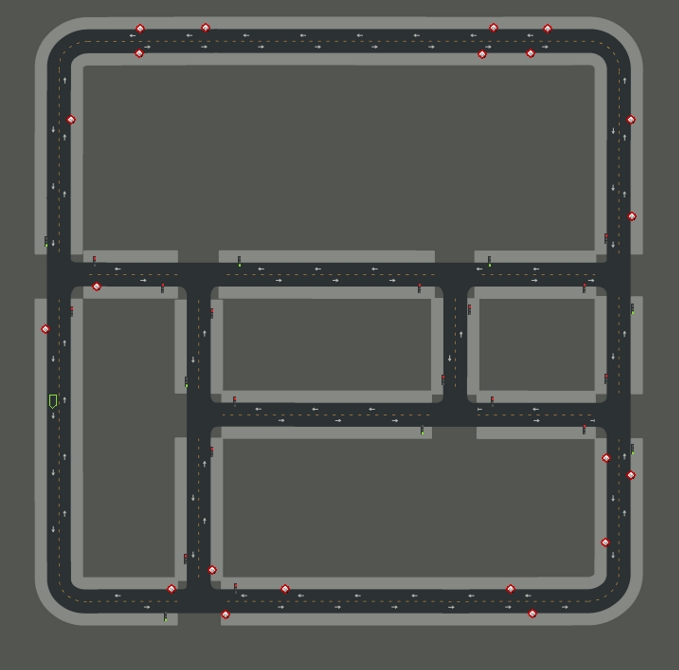}
    \caption{From left to right: the maps of Town1, Town2, Town3, Town4 and Town5 in our experiments.}
    \label{fig:town_map}
\end{figure}

\subsection{State Representation}
\label{app:system:state}
In this section, we describe the details of our state representation in our navigation system. At each time step, the state is a concatenation of the following attributes:
\begin{itemize}
    \item the current (x,y,z) coordinates of the vehicle. 
    \item The current velocity in the x,y,z direction. 
    \item The current heading (in radius) of the vehicle.
    \item An indicator $i = e^{-d}$, where $d$ is the distance to the next junction along the route. 
    \item A boolean indicating whether the vehicle is at a roundabout. 
    \item The (x,y) coordinates of the next waypoint in the route list return by the A* planner, which is selected according to the current speed of the vehicle. 
    \item The steering input from the user.
    \item A boolean indicating whether the vehicle is in reverse mode. 
    \item The braking input from the user. 
\end{itemize}

\subsection{List of instructions}
\label{app:system:action}
\begin{itemize}
    \item No instruction. 
    \item Turn left. 
    \item Turn right.
    \item Prepare to turn left. 
    \item Prepare to turn right.
    \item Enter the roundabout. 
    \item Keep left (in the roundabout).
\end{itemize}

\subsection{Iteration Details}
\label{app:system:iter}
In this section, we introduce the environment (user, vehicle, visual condition, etc.) involved in each iteration in Table~\ref{table:iterations}. 
\begin{table}[h] 
\renewcommand{\arraystretch}{1.26}
\centering{
\begin{tabular}{|p{1.7cm}|p{0.9cm}|c|c|p{1.7cm}|p{2.4cm}|} 
\hline
Iteration index & User Index   &  Vehicle  & Camera & Visual Condition & Control Equipment   \\
\hline
Offline 0 & 1 & Chevrolet Camaro & First personal view & Normal & Keyboard \\
\hline
Offline 1 & 1 & Chevrolet Camaro & Third personal view & Normal & Keyboard \\
\hline
Offline 2 & 1 & Tesla Cybertruck & First personal view & Normal & Keyboard \\
\hline
Online 0 & 1 & Tesla Cybertruck & Third personal view & Normal & Keyboard \\
\hline
Online 1 & 1 & Chevrolet Camaro & First personal view & RGB & Keyboard \\
\hline
Online 2 & 1 & Chevrolet Camaro & Third personal view & Foggy & Keyboard \\
\hline
Online 3 & 1 & Chevrolet Camaro & First personal view & Foggy & Keyboard \\
\hline
Online 4 & 1 & Tesla Cybertruck & Third personal view & Foggy & Keyboard \\
\hline
Online 5 & 1 & Tesla Cybertruck & First personal view & Foggy & Keyboard \\
\hline
Online 6 & 1 & Tesla Cybertruck & Top view & Foggy & Keyboard \\
\hline
Online 7 & 2 & Chevrolet Camaro & Third personal view & Normal & Keyboard \\
\hline
Online 8 & 3 & Chevrolet Camaro & Third personal view & Normal & Keyboard \\
\hline
Online 9 & 4 & Chevrolet Camaro & Third personal view & Normal & Keyboard \\
\hline
Online 10 & 5 & Chevrolet Camaro & Third personal view & Normal & Keyboard \\
\hline
Online 11 & 6 & Chevrolet Camaro & Third personal view & Normal & Keyboard \\
\hline
Online 12 & 7 & Chevrolet Camaro & Third personal view & Normal & Keyboard \\
\hline
Online 13 & 1 & Chevrolet Camaro & First personal view & Normal & Logitech G29 \\
\hline
Online 14 & 1 & Chevrolet Camaro & Third personal view & Normal & Logitech G29 \\
\hline
\end{tabular}}
\caption{}
\label{table:iterations}
\end{table}

\newpage
\section{Additional Experiments}
\label{app:exp}
In this section, we provide results from the two remaining town maps. The evaluations are the same as in Sec.~\ref{exp:results} and Sec.~\ref{exp:safety}. We observe that the results from the additional experiments follow the trend in Sec.~\ref{exp:results} and Sec.~\ref{exp:safety}.
\subsection{Tracking Error}
\begin{figure}[h]
    \centering
    \includegraphics[width=0.45\textwidth]{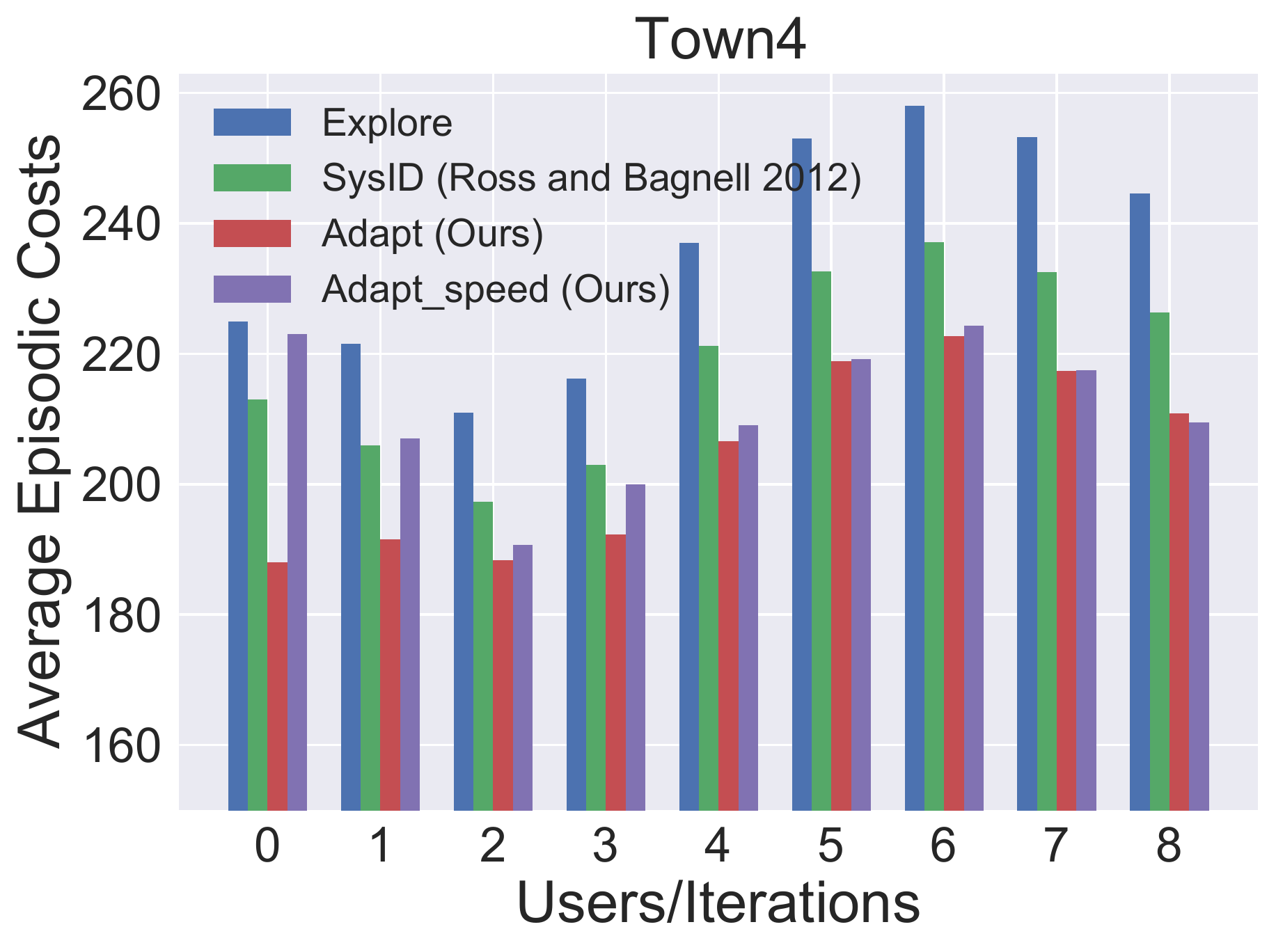}
    \includegraphics[width=0.45\textwidth]{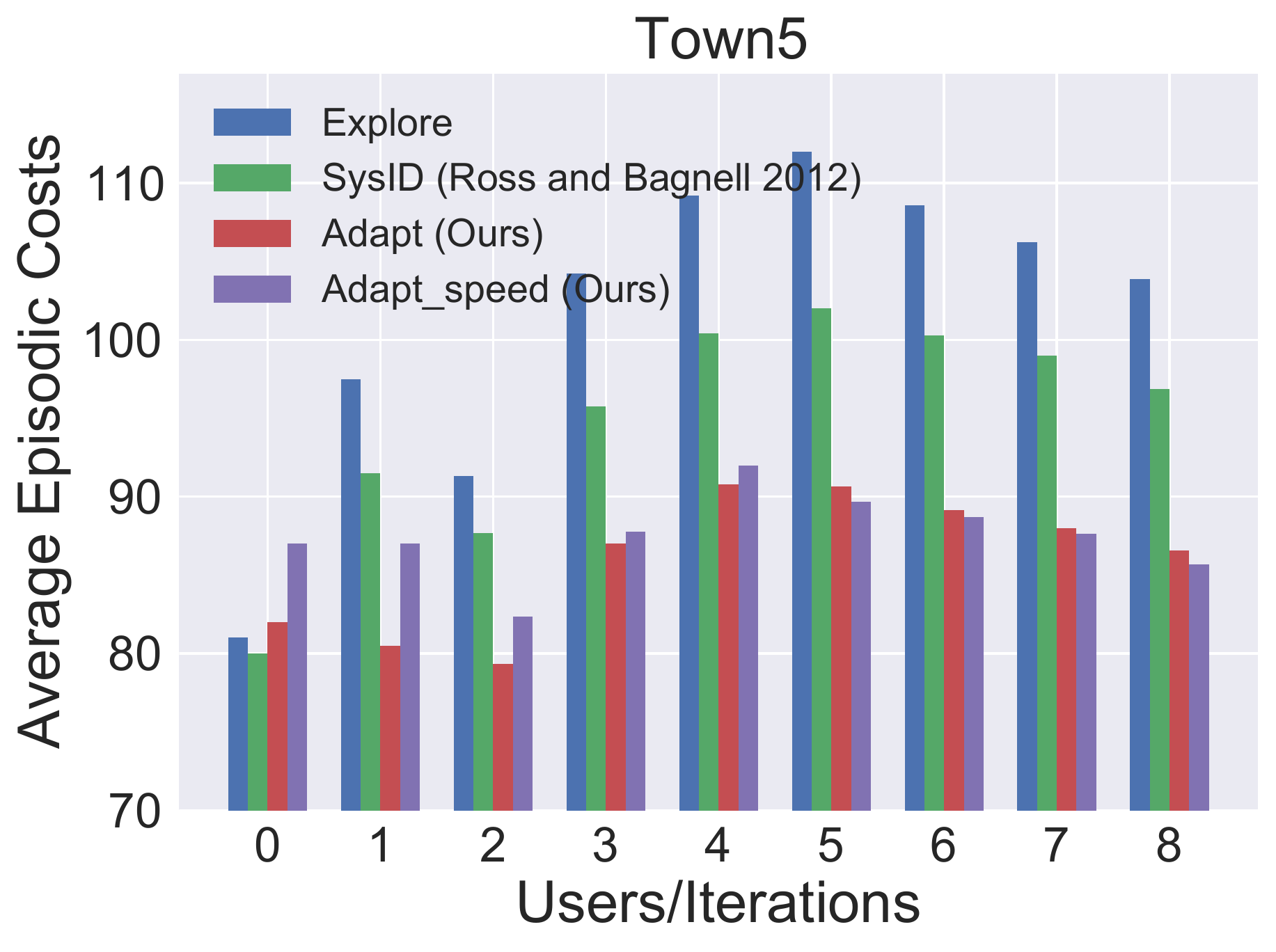}
    \caption{Performance bar plots of the four baselines in the remaining two town maps. The metric we use is the average episodic accumulative costs upon the current iteration. Here the episodic accumulative costs are calculated by averaging the point-wise tracking error over the ten routes that we defined in Sec.~\ref{exp:set_up}}
    \label{app:exp:fig:results}
\end{figure}

\subsection{Safety}
\begin{table}[h] 
\centering{
\renewcommand{\arraystretch}{1.2}
\begin{tabular}{|l|c|c|c|} 
\hline
                    &  Town4  & Town5    \\
\hline
Explore policy      &  4.00 (0.67) &  2.89 (0.99)   \\
\hline
SysID (Ross and Bagnell 2012)              &  3.00 (0.82) &  2.11 (0.74)  \\
\hline
Adaptive   (Ours)         &  2.22 (0.63) &  1.22 (0.92)   \\
\hline
Adaptive Speed  (Ours)    &  1.89 (0.74) &  1.11 (0.74) \\
\hline
\end{tabular}}
\caption{Average counts of total collision of each baseline in the remaining two town maps. The average is over the 9 online iterations and the standard deviation is indicated in the parenthesis.}
\label{app:table:safety}
\end{table}

\newpage
\section{Implementation Details}
\label{app:imp_detail}
\subsection{Hyperparameters}
In this section we provide our hyperparameters in Table~\ref{app:table:hyperparam}.
\begin{table}[h] 
\centering
\begin{tabular}{lp{4.0cm}l} 
\toprule
                                         & Value Considered     & Final Value \\ 
\hline
Model Learning Rate                      & \{1e-3, 5e-3, 1e-4\} & 1e-3        \\
Dynamics Model Hidden Layer Size         & \{[500,500],[500,500,500],      & [256,256,256]   \\
& [256,256],[256,256,256]\}   &  \\
Number of Model Updates                  & \{150,100,50\}       & 50          \\
Multistep Loss L                         & \{3,4,5\}            & 5           \\
Batch Size                               & \{128,256\}          & 128         \\
History Length                           & \{3,4,5\}            & 5           \\
Adaptation Learning Rate                 & \{0.1,0.05,0.01, 0.001\}    & 0.01 \\
MPC rollout horizon                      & \{3,4,5,6\}          & 5           \\
\toprule
\end{tabular}
\caption{Hyperparameters.}
\label{app:table:hyperparam}
\end{table}

\subsection{Details on user study}
In this section, we provide more details on the user study. We since the users will drive all 4 baselines across the same routes, a principled pipeline is required to ensure fairness of the comparison. Each user will first be trained on the control of the system, the training time varies among the users depending on their expertise in driving and keyboard controls. The user will first collect the trajectory for adaptation using the explore policy for each map. Then we randomize the order of the four baselines, and the user will drive one baseline across the 3 maps (30 routes in total). Then the user will take a 30 minutes break and proceed to the next baseline. Depending on the user's availability, we either invite the user back the next day or take a very long break after the second baseline. Then we finish the third and fourth baselines with another 30 minutes break in between. The users are agnostic about the baselines they are currently using. We conduct the experiment this way for two reasons: first, the experiment itself is very intensive, and thus we want to avoid performance drop of any baselines due to user's fatigue. Another reason is that we utilize break time to avoid users having any memories with the previous routes to ensure fairness of the comparison. Finally, for the experiment in Sec~\ref{exp:emp_regret}, we invite all the users back to drive with the best model from hindsight, except this time we don't enforce the order of users since we don't update the model anymore.

\end{document}